\numberwithin{equation}{section}
\theoremstyle{plain}
\newtheorem{theorem}{Theorem}
\newtheorem{lemma}{Lemma}
\begin{document}
\begin{frontmatter}

\title{Supplement to ``Reversible MCMC on Markov equivalence classes of sparse directed acyclic graphs"\protect\thanksref{T11}}
\runtitle{Reversible MCMC on   Markov equivalence classes of sparse DAGs}

 \thankstext{T11}{This work was supported partially by  NSFC (11101008,11101005,  71271211), 973 Program-2007CB814905, DPHEC-20110001120113, US NSF grants
DMS-0907632,CCF-0939370, DMS-0605165, DMS-1107000, SES-0835531 (CDI), ARO grant W911NF-11-1-0114, US NSF Science and Technology Center. }
\begin{aug}
\author{\fnms{Yangbo} \snm{He }\thanksref{mm1}\ead[label=e1]{heyb@math.pku.edu.cn}},
\author{\fnms{Jinzhu} \snm{ Jia}\thanksref{mm1,mm2}\ead[label=e2]{jzjia@math.pku.edu.cn}}
\and
\author{\fnms{Bin} \snm{Yu}\thanksref{mm2}\ead[label=e3]{binyu@stat.berkeley.edu}}


\runauthor{Y.B. He et al.}

\affiliation{School of Mathematical Sciences, Center of Statistical Science,  LMAM and LMEQF,   Peking University \thanksmark{mm1}\\ University of California, Berkeley  \thanksmark{mm2}}

\address{School of Mathematical Sciences,\\
         Peking University\\
         Beijing 100871, China \\
\printead{e1}\\
\phantom{E-mail:\ } \printead*{e2}\\}

\address{Department of Statistics \\UC,
       Berkeley,CA 94720\\
\printead{e3}}

\end{aug}


\begin{abstract}
This supplementary material includes three parts: some preliminary results,  four examples, an experiment, three new algorithms,  and  all proofs of the results in  the paper \cite{heaos}.
\end{abstract}

\begin{keyword}
\kwd{Sparse graphical model}
\kwd{Reversible Markov chain}
\kwd{Markov equivalence class}
 \end{keyword}
\end{frontmatter}

\tableofcontents

\def\refconstrainedset{9}
\def\refperfect{1}
\def\refmc{1}
\def\refmsetalg{1.1}
\def\refalgai{1.1.1}
\def\refalgaii{1.1.2}
\def\refalgaiii{1.1.3}
\def\refspro{1}
\def\refsprod{2}
\def\refSalgorithms{3.2}
\def\refSacceleration{3.2.2}
\def\eqrefperfectset{(3.3)}
\def\refvaliddef{5}
\def\eqrefconstraintedC{(3.2)}
\def\eqrefedgec{(3.1)}
\def\refsprog{1}
\def\refessential{2}
\def\refchickervalid{3}
\section{Two preliminary algorithms}\label{preliminary}

In this Section, we provide   algorithms introduced by   Dor and Tarsi \cite{dor1992simple}, and Chickering \cite{chickering1995transformational,chickering2002learning} respectively. These results are necessary to implement our proposed approach technically.

 Some definitions and   notation    are introduced first.  A  directed edge  of a DAG is \emph{compelled}   if it occurs in the corresponding completed PDAG, otherwise, the directed edge is \emph{reversible} and the corresponding parents are reversible parents.   Recall $N_x$ be   the set of all neighbors of $x$, $\Pi_x$ is   the set of all parent of $x$,   $N_{xy}=N_x\cap N_y$ and $\Omega_{x,y}=\Pi_x \bigcap N_y$ and the concept of ``strongly protected" is presented in Definition \refspro\, in the paper \cite{heaos}.

 Algorithm \ref{pdag2dag} generates    a  consistent extension of a PDAG  \cite{dor1992simple}.   Algorithm \ref{dag2cpdag} creates the corresponding completed PDAG of a DAG \cite{chickering1995transformational}. They are used to implement Chickering's approach.

\begin{algorithm}[h]
\caption{(Dor and Tarsi \cite{dor1992simple}) Generate a  consistent extension of a PDAG  }\label{pdag2dag}
\KwIn{ A  PDAG $\cal P$ that admits a consistent extension}
\KwOut{  A DAG $\cal D$ that is a consistent extension of $\cal P$.}
Let ${\cal D}:={\cal P}$;\\
\While{$\cal P$ is not empty}{
Select a vertex $x$   in $\cal P$ such that (1)
$x$  has no outgoing edges and (2) if $N_x$ is not empty, then every vertex in $N_x$ is adjacent to all vertices in $N_x\cup \Pi_x$.
\tcc{Dor and Tarsi \cite{dor1992simple}  show that a vertex $x$ with these properties   is guaranteed to exist if $\cal P$ admits a consistent extension. }
Let all undirected edges adjacent to $x$ be directed toward $x$ in ${\cal D}$ \\
Remove $x$ and all incident edges from     $\cal P$.
}
\Return{  $\cal D$}
\end{algorithm}

\begin{algorithm}[h]
\caption{(Chickering \cite{chickering1995transformational}) Create the completed PDAG of a DAG }\label{dag2cpdag}
\KwIn{ $\cal D$,  a DAG}
\KwOut{  The completed PDAG ${\cal C}$ of  DAG $\cal D$.}

Perform a topological sort on the vertices in $\cal D$ such that for any pair of vertices $x$ and $y$ in $\cal D$, $x$ must precede $y$ if $x$ is an ancestor of $y$\;

 Sort  the edges first in ascending order for incident vertices and then   in  descending order for outgoing vertices;
Label every edge in $\cal D$ as ¡°unknown¡±;

\While {there are edges labeled ¡°unknown¡± in $\cal D$ \label{whilehere}} {
 Let $x \to y$ be the lowest ordered edge that is labeled ¡°unknown¡±\\
 \For {every edge $w\to x$  labeled ¡°compelled¡±}
{\If{ $w$ is not a parent of y}
{  $x \to y$ and every edge incident into $y$ with ¡°compelled¡±\\
\ Goto \ref{whilehere}}
\Else {
  Label $w\to y$  with ¡°compelled¡±}}
  \If {there exists an edge $z\to  y$ such that $z = x$ and $z$ is not a parent of $x$}
{Label  $x \to y$ and all ¡°unknown¡± edges incident into $y$ with ¡°compelled¡±}
\Else{
  Label  $x \to y$ and all ¡°unknown¡± edges incident into $y$ with ¡°reversible¡±}
}
Let ${\cal C}={\cal D}$ and undirect all edges labeled "reversible" in $\cal C$.

\Return{completed PDAG $\cal C$}
\end{algorithm}

\section{Additional examples, experiment and algorithms}

This section include three parts: (1) some examples to illuminate the methods proposed in the paper \cite{heaos}, (2) an experiment about v-structures, and (3) three algorithms to test the conditions \textbf{iu}$_3$, \textbf{id}$_3$ and \textbf{dd}$_2$ in   Algorithm \refmsetalg\,  only based on $e_t$ in an efficient manner.

\subsection{Examples} Four examples are presented to illustrate operators, the generation of a resulting completed PDAG of an operator, the conditions of a perfect operator set, and the process of constructing a perfect operator set.

  \textbf{Example 1.} This example illustrates six operators on a completed PDAG $\cal C$ and their corresponding modified graphs.
 Figure \ref{operatorsix} displays  six operators: InsertU $x-z$, DeleteU $y-z$, InsertD   $x\to v$,  DeleteD $z\to v$, MakeV $z\to y\leftarrow u$, and RemoveV $z\to v\leftarrow u$.     After inserting an undirected edge $x-z$ into the initial graph ${\cal C}$, we  get a modified graph denoted as ${\cal P}_1$ in  Figure \ref{operatorsix}.
 By applying the other five operators to  ${\cal C} $ in Figure \ref{operatorsix} respectively, we can obtain other five corresponding modified graphs   ${\mathcal{P}}_2, {\mathcal{P}}_3, {\mathcal{P}}_4, {\mathcal{P}}_5$, and ${\mathcal{P}}_6$. Here the operator ``MakeV $z\to y\leftarrow u$" modifies $z-y-u$ to $z\to y\leftarrow u$ and the operator ``Remove $z\to v\leftarrow u$" modifies $z\to v\leftarrow u$ to $z- v- u$.  Notice that a modified graph might not be a PDAG though all modified graphs in this example are PDAGs.

  \begin{figure}[h]
 \centering
  \begin{minipage}[c]{0.25\linewidth}
{\unitlength=0.6mm \begin{picture}(50,60)(10,20)
  \thicklines
\put(30,55){\circle*{1.5}} \put(20,35){\circle*{1.5}}
\put(30,45){\circle*{1.5}} \put(30,25){\circle*{1.5}}
\put(40,35){\circle*{2}}

\put(30,57){\mbox{$ x $}} \put(17,31){\mbox{$ y $}}
 \put(30,47){\mbox{$z $}}
\put(27,21){\mbox{$ u $}}
\put(42,34){\mbox{$ v $}}

\put(30,55){\line(-1,-2){10}} \put(21,36){\line(1,1){8}}
\put(21,34){\line(1,-1){8}} \put(21,35){\vector(1,0){18}}
\put(31,44){\vector(1,-1){8}}
\put(31,26){\vector(1,1){8}}
\put(10,70){\parbox{3cm}{
\begin{center}
Initial Completed PDAG
\end{center}
 }}
\put(35,20){{\mbox{(${\cal C}$)}}}
\end{picture} }
     \end{minipage}
   \begin{minipage}[t]{0.24\linewidth}
   \centering
 { \unitlength=0.6mm \begin{picture}(50,60)(10,20)
  \thicklines
\put(30,55){\circle*{1.5}} \put(20,35){\circle*{1.5}}
\put(30,45){\circle*{1.5}} \put(30,25){\circle*{1.5}}
\put(40,35){\circle*{2}}

\put(30,57){\mbox{$ x $}} \put(17,31){\mbox{$ y $}}
 \put(32,46){\mbox{$z $}}
\put(27,21){\mbox{$ u $}}
\put(42,34){\mbox{$ v $}}

\put(30,55){\line(-1,-2){10}}\put(30,55){\line(0,-1){10}}  \put(21,36){\line(1,1){8}}
\put(21,34){\line(1,-1){8}} \put(21,35){\vector(1,0){18}}
\put(31,44){\vector(1,-1){8}}
\put(31,26){\vector(1,1){8}}
\put(10,65){\mbox{InsertU  $x-z$}}
\put(35,20){{\mbox{(${\cal P}_1$)}}}
\end{picture}

\begin{picture}(50,60)(10,20)
  \thicklines
\put(30,55){\circle*{1.5}} \put(20,35){\circle*{1.5}}
\put(30,45){\circle*{1.5}} \put(30,25){\circle*{1.5}}
\put(40,35){\circle*{2}}

\put(30,57){\mbox{$ x $}} \put(17,31){\mbox{$ y $}}
 \put(30,47){\mbox{$z $}}
\put(27,21){\mbox{$ u $}}
\put(42,34){\mbox{$ v $}}

\put(30,55){\line(-1,-2){10}}
\put(21,34){\line(1,-1){8}} \put(21,35){\vector(1,0){18}}
\put(31,44){\vector(1,-1){8}}
\put(31,26){\vector(1,1){8}}
\put(10,65){\mbox{DeleteU $y-z$ }}
\put(35,20){{\mbox{(${\cal P}_2$)}}}
\end{picture}\\}
     \end{minipage}
  \begin{minipage}[t]{0.24\linewidth}
   \centering
 {\unitlength=0.6mm \begin{picture}(50,60)(10,20)
  \thicklines
\put(30,55){\circle*{1.5}} \put(20,35){\circle*{1.5}}
\put(30,45){\circle*{1.5}} \put(30,25){\circle*{1.5}}
\put(40,35){\circle*{2}}

\put(30,57){\mbox{$ x $}} \put(17,31){\mbox{$ y $}}
 \put(30,47){\mbox{$z $}}
\put(27,21){\mbox{$ u $}}
\put(42,34){\mbox{$ v $}}

\put(30,55){\line(-1,-2){10}} \put(30,55){\vector(1,-2){9}}
\put(21,36){\line(1,1){8}}
\put(21,34){\line(1,-1){8}} \put(21,35){\vector(1,0){18}}
\put(31,44){\vector(1,-1){8}}
\put(31,26){\vector(1,1){8}}
\put(10,65){\mbox{InsertD $x\to v$ }}
\put(35,20){{\mbox{(${\cal P}_3$)}}}
\end{picture}

 \begin{picture}(50,60)(10,20)
  \thicklines
\put(30,55){\circle*{1.5}} \put(20,35){\circle*{1.5}}
\put(30,45){\circle*{1.5}} \put(30,25){\circle*{1.5}}
\put(40,35){\circle*{2}}

\put(30,57){\mbox{$ x $}} \put(17,31){\mbox{$ y $}}
 \put(30,47){\mbox{$z $}}
\put(27,21){\mbox{$ u $}}
\put(42,34){\mbox{$ v $}}

\put(30,55){\line(-1,-2){10}}
 \put(21,36){\line(1,1){8}}
\put(21,34){\line(1,-1){8}}
\put(21,35){\vector(1,0){18}}
\put(31,26){\vector(1,1){8}}
\put(10,65){\mbox{DeleteD $z\to v$ }}
\put(35,20){{\mbox{(${\cal P}_4$)}}}
\end{picture} }
     \end{minipage}
 \begin{minipage}[t]{0.24\linewidth}
   \centering
{\unitlength=0.6mm \begin{picture}(50,70)(10,20)
  \thicklines
\put(25,55){\circle*{1.5}} \put(20,35){\circle*{1.5}}
\put(30,45){\circle*{1.5}} \put(30,25){\circle*{1.5}}
\put(40,35){\circle*{2}}

\put(27,57){\mbox{$ x $}} \put(17,31){\mbox{$ y $}}
 \put(30,47){\mbox{$z $}}
\put(27,21){\mbox{$ u $}}
\put(42,34){\mbox{$ v $}}

\put(25,55){\line(-1,-4){5}}

 \put(29,44){\vector(-1,-1){8}}
\put(29,26){\vector(-1,1){8}}

\put(21,35){\vector(1,0){18}}
\put(31,44){\vector(1,-1){8}}
\put(31,26){\vector(1,1){8}}
\put(10,65){\mbox{MakeV  $z\to y\leftarrow u$  }}
\put(35,20){{\mbox{(${\cal P}_5$)}}}
\end{picture}

\vspace{-0.7cm}\begin{picture}(50,70)(10,20)
  \thicklines
\put(30,55){\circle*{1.5}} \put(20,35){\circle*{1.5}}
\put(30,45){\circle*{1.5}} \put(30,25){\circle*{1.5}}
\put(40,35){\circle*{2}}

\put(30,57){\mbox{$ x $}} \put(17,31){\mbox{$ y $}}
 \put(30,47){\mbox{$z $}}
\put(27,21){\mbox{$ u $}}
\put(42,34){\mbox{$ v $}}

\put(30,55){\line(-1,-2){10}} \put(21,36){\line(1,1){8}}
\put(21,34){\line(1,-1){8}} \put(21,35){\vector(1,0){18}}
\put(31,44){\line(1,-1){9}}
\put(31,26){\line(1,1){9}}
\put(10,65){\mbox{RemoveV $z\to v\leftarrow u$ }}
\put(35,20){{\mbox{(${\cal P}_6$)}}}
\end{picture} }
     \end{minipage}
\caption{Examples of six operators of PDAG $\cal C$. ${\cal P}_1$ to ${\cal P}_6$ are the modified graphs of six operators.}
\label{operatorsix}
\end{figure}
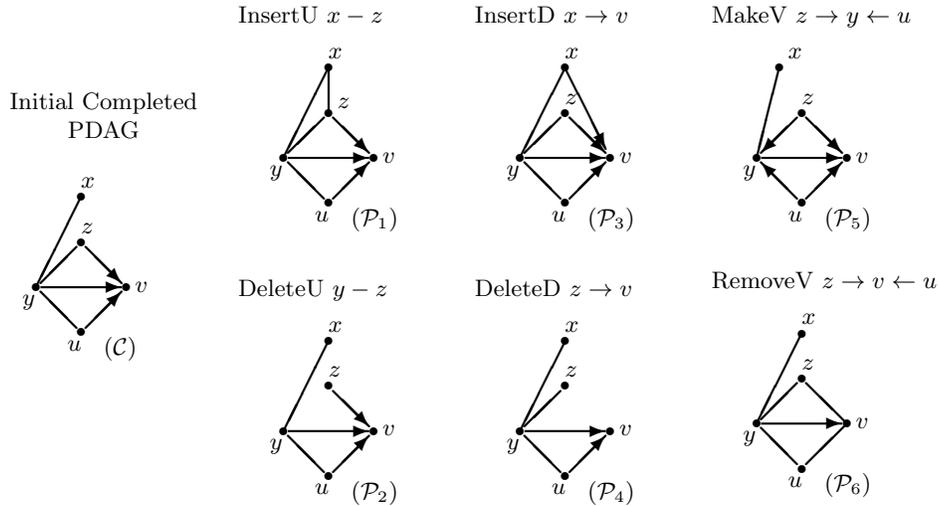

In the above example, we see that the modified graph of an operator, denoted by $\cal P$,  might   be a   PDAG, but might not be a completed PDAG. For example,  the modified graphs ${\cal P}_4$, and ${\cal P}_6$ in Figure \ref{operatorsix} are not  completed PDAGs  because  the directed edge $y\to v$ is not strongly protected.

\textbf{Example 2.} This example   illustrates  Chickering's approach to obtain the resulting completed PDAG of a valid operator from its modified graph.
Consider the initial completed PDAG $\cal C$ and the operator ``Remove  $z\to v\leftarrow u$" in Figure \ref{operatorsix}. We  illustrate  in Figure \ref{precedures} the steps of Chickering's approach that generates the resulting completed PDAG ${\cal C}_1$ by applying   ``Remove  $z\to v\leftarrow u$" to  ${\cal C}$.  The first step  (step 1)  extends the modified graph (a PDAG ${\cal P}_6$) to  a  consistent extension (${\cal D}_6$) via Algorithm \ref{pdag2dag}. The second step (step 2)   constructs the resulting    completed PDAG ${\cal C}_1$ of the operator  ``Remove  $z\to v\leftarrow u$" from the DAG ${\cal D}_6$ via Algorithm \ref{dag2cpdag}.

\begin{figure}[h]
 \centering
  \begin{minipage}[t]{0.20\linewidth}
{\unitlength=0.6mm \begin{picture}(50,60)(20,20)
  \thicklines
\put(30,55){\circle*{1.5}} \put(20,35){\circle*{1.5}}
\put(30,45){\circle*{1.5}} \put(30,25){\circle*{1.5}}
\put(40,35){\circle*{2}}

\put(30,57){\mbox{$ x $}} \put(17,31){\mbox{$ y $}}
 \put(30,47){\mbox{$z $}}
\put(27,21){\mbox{$ u $}}
\put(42,34){\mbox{$ v $}}

\put(30,55){\line(-1,-2){10}} \put(21,36){\line(1,1){8}}
\put(21,34){\line(1,-1){8}} \put(21,35){\vector(1,0){18}}
\put(31,44){\vector(1,-1){8}}
\put(31,26){\vector(1,1){8}}
\put(5,65){{\parbox{3cm}{\begin{center} Initial\\ completed PDAG \end{center}}}}
\put(35,20){{\mbox{(${\cal C}$)}}}
\end{picture} }
     \end{minipage}
\hspace{-.6cm}
\begin{minipage}[t]{0.05\linewidth}
{\unitlength=0.6mm \begin{picture}(5,80)(0,20)
  \thicklines
\put(0,35){{\mbox{$\Longrightarrow$}}}
\put(-20,43){{\parbox{3cm}{\begin{center}(apply \\operator) \end{center}}}}
\end{picture} }
\end{minipage}
\hspace{0.3cm}
 \begin{minipage}[t]{0.20\linewidth}
   \centering
{\unitlength=0.6mm \begin{picture}(50,60)(17,20)
  \thicklines
\put(30,55){\circle*{1.5}} \put(20,35){\circle*{1.5}}
\put(30,45){\circle*{1.5}} \put(30,25){\circle*{1.5}}
\put(40,35){\circle*{2}}
\put(5,65){{\parbox{3cm}{\begin{center}modified\\ graph \end{center}}}}
\put(30,57){\mbox{$ x $}} \put(17,31){\mbox{$ y $}}
 \put(30,47){\mbox{$z $}}
\put(27,21){\mbox{$ u $}}
\put(42,34){\mbox{$ v $}}

\put(30,55){\line(-1,-2){10}} \put(21,36){\line(1,1){8}}
\put(21,34){\line(1,-1){8}} \put(21,35){\vector(1,0){18}}
\put(31,44){\line(1,-1){9}}
\put(31,26){\line(1,1){9}}

\put(35,20){{\mbox{(${\cal P}_6$)}}}
\end{picture} }
     \end{minipage}
     \hspace{-.8cm}
     \begin{minipage}[t]{0.05\linewidth}
{\unitlength=0.6mm \begin{picture}(5,80)(0,20)
  \thicklines
\put(0,35){{\mbox{$\Longrightarrow$}}}
\put(-20,40){{\parbox{3cm}{\begin{center}     step 1 \end{center}}} }
\end{picture} }
\end{minipage}
 \begin{minipage}[t]{0.20\linewidth}
   \centering
{\unitlength=0.6mm \begin{picture}(50,60)(17,20)
  \thicklines
\put(30,55){\circle*{1.5}} \put(20,35){\circle*{1.5}}
\put(30,45){\circle*{1.5}} \put(30,25){\circle*{1.5}}
\put(40,35){\circle*{2}}
\put(5,65){ {\parbox{3cm}{\begin{center}consistent\\ extension\end{center}}}}
\put(30,57){\mbox{$ x $}} \put(17,31){\mbox{$ y $}}
 \put(30,47){\mbox{$z $}}
\put(27,21){\mbox{$ u $}}
\put(42,34){\mbox{$ v $}}

\put(20,35){\vector(1,2){9.5}} \put(21,36){\vector(1,1){8}}
\put(21,34){\vector(1,-1){8}} \put(21,35){\vector(1,0){18}}
\put(39,36){\vector(-1,1){8.5}}
\put(39,34){\vector(-1,-1){8.5}}

\put(35,20){{\mbox{(${\cal D}_6$)}}}
\end{picture} }
     \end{minipage}
     \hspace{-.8cm}\begin{minipage}[t]{0.05\linewidth}
{\unitlength=0.6mm \begin{picture}(5,80)(0,20)
  \thicklines
\put(0,35){{\mbox{$\Longrightarrow$}}}
\put(-20,40){{\parbox{3cm}{\begin{center} step 2\end{center}}}    }
\end{picture} }
\end{minipage}
 \begin{minipage}[t]{0.20\linewidth}
   \centering
{\unitlength=0.6mm \begin{picture}(50,60)(17,20)
  \thicklines
\put(30,55){\circle*{1.5}} \put(20,35){\circle*{1.5}}
\put(30,45){\circle*{1.5}} \put(30,25){\circle*{1.5}}
\put(40,35){\circle*{2}}

\put(30,57){\mbox{$ x $}} \put(17,31){\mbox{$ y $}}
 \put(30,47){\mbox{$z $}}
\put(27,21){\mbox{$ u $}}
\put(42,34){\mbox{$ v $}}

\put(30,55){\line(-1,-2){10}} \put(21,36){\line(1,1){8}}
\put(21,34){\line(1,-1){8}} \put(21,35){\line(1,0){19}}
\put(31,44){\line(1,-1){9}}
\put(31,26){\line(1,1){9}}
\put(5,65){{\parbox{3cm}{\begin{center}the resulting\\ completed PDAG\end{center}}}}
\put(35,20){{\mbox{(${\cal C}_1$)}}}
\end{picture} }
     \end{minipage}
\caption{Example for constructing the unique resulting completed PDAG of a valid operator.
 An operator ``Remove  $z\to v\leftarrow u $" in Figure \ref{operatorsix} is applied to the initial completed PDAG $\cal C$ and
  finally results in the resulting completed PDAG ${\cal C}_1$.
}
\label{precedures}
\end{figure}
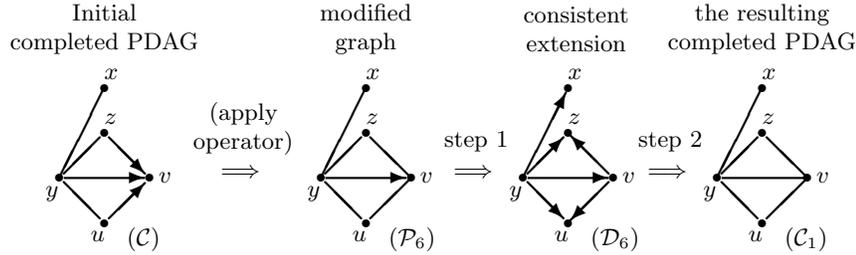

{\textbf{Example 3.}} This example illustrates that $\cal O$ in Equation \eqrefperfectset\,  will  not be reversible if  condition  \textbf{iu}$_3$ or  \textbf{dd}$_2$ is not contained in Definition \refconstrainedset.
  Consider the operator set $\cal O$ defined in   Equation \eqrefperfectset\, for ${\cal S}_5$ and   the   completed PDAG ${\cal C} \in {\cal S}_5$ in Figure \ref{antiexample1}. We have that operator InsertU $z-u$ and DeleteD $z\to v$ are valid. As shown in  Figure \ref{antiexample1}, InsertU $z-u$  transfers $\cal C$ to  the completed PDAG ${\cal C}_1$ and  DeleteD $z\to v$    transfers     $\cal C$ to the completed PDAG ${\cal C}_2$. However, deleting $z-u$ from ${\cal C}_1$
 will result in an undirected PDAG   distinct from     $\cal C$ and  InsertD $z\to v$ is
  not valid for ${\cal C}_2$. As a consequence, if  $\cal O$ contains InsertU $z-u$ and DeleteD $z\to v$, it will be not reversible. According to   Definition \refconstrainedset, these two operators do not appear in ${\cal O}_{\cal C}$ because they do not satisfy the conditions \textbf{iu}$_3$ and \textbf{dd}$_2$ respectively.

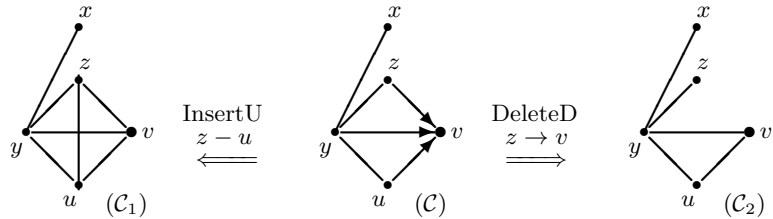
\begin{figure}[h]
 \centering
  \begin{minipage}[t]{0.28\linewidth}
{\unitlength=0.7mm \begin{picture}(50,50)(10,20)
  \thicklines
\put(30,55){\circle*{1.5}} \put(20,35){\circle*{1.5}}
\put(30,45){\circle*{1.5}} \put(30,25){\circle*{1.5}}
\put(40,35){\circle*{2}}

\put(30,57){\mbox{$ x $}} \put(17,31){\mbox{$ y $}}
 \put(30,47){\mbox{$z $}}
\put(27,21){\mbox{$ u $}}
\put(42,34){\mbox{$ v $}}

\put(30,55){\line(-1,-2){10}} \put(21,36){\line(1,1){8}}
\put(21,34){\line(1,-1){8}} \put(21,35){\line(1,0){18}}
\put(31,44){\line(1,-1){8}}
\put(31,26){\line(1,1){8}}
\put(30,46){\line(0,-1){22}}
\put(35,20){{\mbox{(${\cal C}_1$)}}}
\end{picture} }
     \end{minipage}
     \hspace{-.8cm}
     \begin{minipage}[t]{0.05\linewidth}
{\unitlength=0.6mm \begin{picture}(5,30)(0,0)
  \thicklines
\put(0,10){{\mbox{$\Longleftarrow \hspace{-0.1cm}=\hspace{-0.1cm}=$}}}
\put(-3,20){\mbox{InsertU}}
\put(0,15){\mbox{$z-u$}}
\end{picture} }
\end{minipage}
 \hspace{.3cm}
  \begin{minipage}[t]{0.28\linewidth}
{\unitlength=0.7mm \begin{picture}(50,50)(10,20)
  \thicklines
\put(30,55){\circle*{1.5}} \put(20,35){\circle*{1.5}}
\put(30,45){\circle*{1.5}} \put(30,25){\circle*{1.5}}
\put(40,35){\circle*{2}}

\put(30,57){\mbox{$ x $}} \put(17,31){\mbox{$ y $}}
 \put(30,47){\mbox{$z $}}
\put(27,21){\mbox{$ u $}}
\put(42,34){\mbox{$ v $}}

\put(30,55){\line(-1,-2){10}} \put(21,36){\line(1,1){8}}
\put(21,34){\line(1,-1){8}} \put(21,35){\vector(1,0){18}}
\put(31,44){\vector(1,-1){8}}
\put(31,26){\vector(1,1){8}}
\put(35,20){{\mbox{(${\cal C}$)}}}
\end{picture} }
     \end{minipage}
     \hspace{-.8cm}
     \begin{minipage}[t]{0.05\linewidth}
{\unitlength=0.6mm \begin{picture}(5,30)(0,0)
  \thicklines
\put(0,10){{\mbox{$=\hspace{-0.1cm}=\hspace{-0.1cm}\Longrightarrow $}}}
\put(-3,20){\mbox{DeleteD}}
\put(0,15){\mbox{$z\to v$}}
\end{picture} }
\end{minipage}
 \hspace{.3cm}
 \begin{minipage}[t]{0.28\linewidth}
{\unitlength=0.7mm \begin{picture}(50,50)(10,20)
  \thicklines
\put(30,55){\circle*{1.5}} \put(20,35){\circle*{1.5}}
\put(30,45){\circle*{1.5}} \put(30,25){\circle*{1.5}}
\put(40,35){\circle*{2}}

\put(30,57){\mbox{$ x $}} \put(17,31){\mbox{$ y $}}
 \put(30,47){\mbox{$z $}}
\put(27,21){\mbox{$ u $}}
\put(42,34){\mbox{$ v $}}

\put(30,55){\line(-1,-2){10}} \put(21,36){\line(1,1){8}}
\put(21,34){\line(1,-1){8}} \put(21,35){\line(1,0){18}}
\put(31,26){\line(1,1){8}}
\put(35,20){{\mbox{(${\cal C}_2$)}}}
\end{picture} }
     \end{minipage}

\caption{ Example: Two valid operators bring about irreversibility. It shows valid conditions are not sufficient for perfect operator set.}
\label{antiexample1}
\end{figure}

{\textbf{Example 4.}} This toy example is given  to show how to construct a concrete perfect set of operators following Definition \refconstrainedset\, in the paper \cite{heaos}.
Consider the completed PDAG ${\cal C}$ in Example 3. Here we introduce the procedure to determine $InsertU_{\cal C}$.  All possible operators of inserting an undirected edge to $\cal C$ include: ``InsertU $x-z$", ``InsertU $x-u$", ``InsertU $x-v$" and  ``InsertU $z-u$". The operator ``InsertU $x-v$" is not valid  according to Lemma \refchickervalid\, in the paper \cite{heaos}   since $\Pi(x)\neq \Pi(v)$. The operator ``InsertU $z-u$" is valid; however,  condition \textbf{iu}$_3$  does not hold. According to   Definition \refconstrainedset\, in the paper \cite{heaos}, we have   that   only ``InsertU $x-z$" and ``InsertU $x-u$" are in $InsertU_{\cal C}$. Thus $InsertU_{\cal C}=\{x-z,x-u\}$, where ``$x-z$" denotes ``InsertU $x-z$" in the set. Table \ref{constrainedsetE} lists the six   sets of  operators on  ${\cal C}$.

\begin{table}[h]\caption{ The six   sets of operators of $\cal C$. These operators are perfect. }\label{constrainedsetE}
  \begin{tabular}{|p{70pt}|p{145pt}|p{140pt}|}

\hline

\raisebox{-3.00ex}[0cm][0cm]{\parbox{5cm}
{\unitlength=0.55mm\begin{picture}(50,60)(10,20)
  \thicklines
\put(30,55){\circle*{1.5}} \put(20,35){\circle*{1.5}}
\put(30,45){\circle*{1.5}} \put(30,25){\circle*{1.5}}
\put(40,35){\circle*{2}}

\put(30,57){\mbox{$ x $}} \put(17,31){\mbox{$ y $}}
 \put(30,47){\mbox{$z $}}
\put(27,21){\mbox{$ u $}}
\put(42,34){\mbox{$ v $}}

\put(30,55){\line(-1,-2){10}} \put(21,36){\line(1,1){8}}
\put(21,34){\line(1,-1){8}} \put(21,35){\vector(1,0){18}}
\put(31,44){\vector(1,-1){8}}
\put(31,26){\vector(1,1){8}}
\put(35,20){{\mbox{(${\cal C}$)}}}
\end{picture}}
}
   &  $InsertU_{\cal C}= \{x-z,x-u\}$&  $  DeleteU_{\cal C}=\{x-y,y-z,y-u\}$
      \\[0.2cm] \cline{2-3} &&\\[-0.2cm]
    &  $InsertD_{\cal C}=\{x\to u\}$& $DeleteD_{\cal C}=\{y\to v\}$
     \\[0.2cm] \cline{2-3} &&\\[-0.2cm]
      &  $\begin{array}{rl} MakeV_{\cal C}=\{x-y-z,x-y-u,&\\z-y-u\}&\end{array}$&  $ RemoveV_{\cal C}=\{u\to v\leftarrow z\} $
     \\[0.2cm] \cline{2-3}
  \hline
\end{tabular}
\end{table}

\subsection{Experiment about v-structures}
Below, we present the experiment result in   Figure \ref{vstr} about the    numbers of v-structures  of completed PDAGs  in $\mathcal{S}_p^{rp}$.

 For $\mathcal{S}_p^{1.5p}$ in the main  window,  the medians of  the four distributions are  108, 220, 557 and 1110 for $p$=100, 200, 500, and 1000 respectively.  Figure \ref{vstr} shows that the   numbers of v-structures are much  less than $(p^2)$ for most completed PDAGs in $\mathcal{S}_p^{rp}$ when $r$ is set to   $1.2, 1.5$ or $3$. This result  is useful to analyze the time complexities of Algorithm
\refmc\,   and Algorithm \refmsetalg \, in Section \refSacceleration \,  of the paper \cite{heaos}.

\begin{figure}[h]
\centering
\includegraphics[scale=0.3]{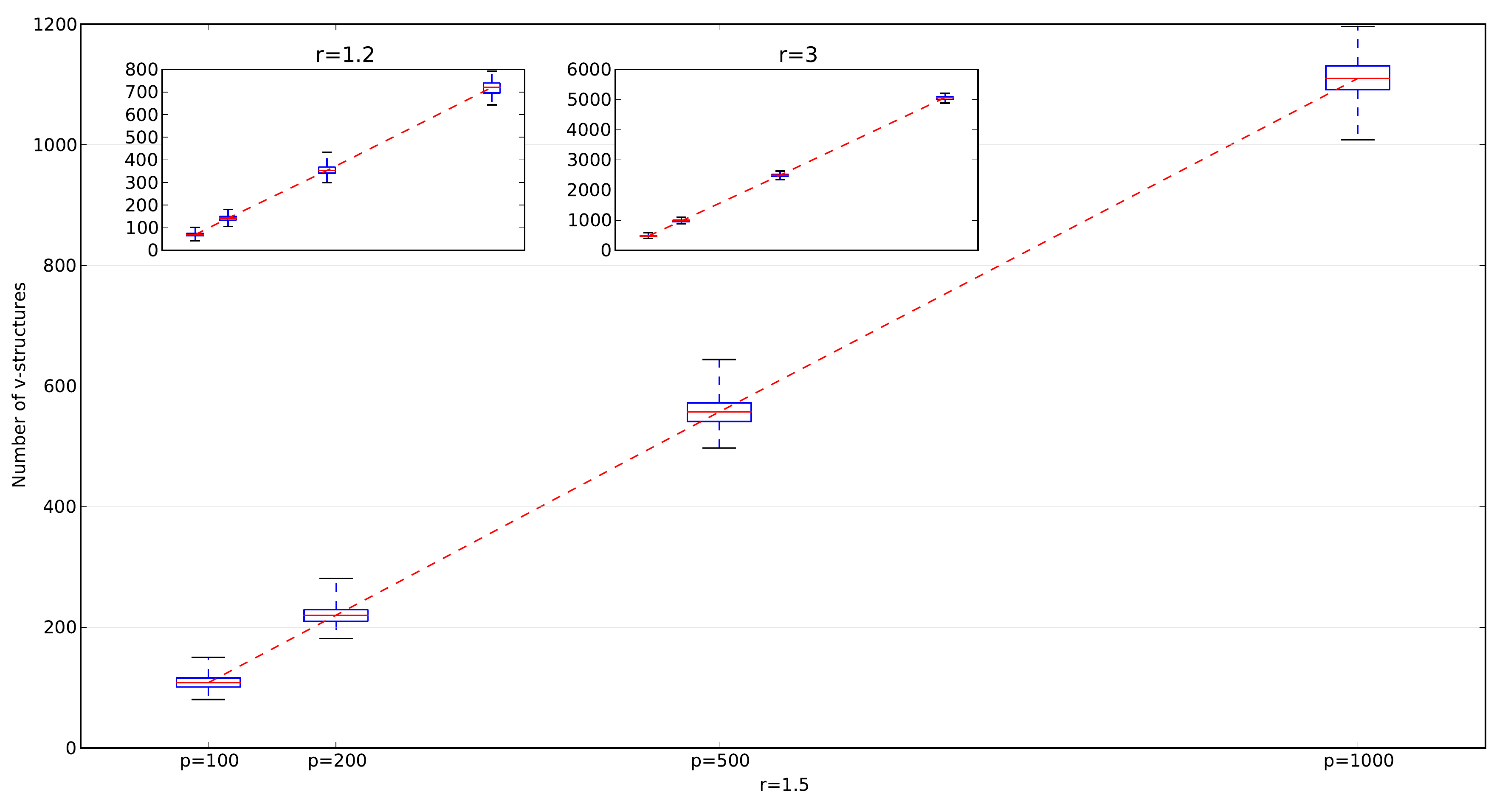}
\caption{The distributions of the numbers of v-structures  of completed PDAGs  in $\mathcal{S}_p^{rp}$. The red lines in the boxes indicate the medians.} \label{vstr}
\end{figure}

\subsection{Three Algorithms to check   \textbf{iu}$_3$, \textbf{id}$_3$ and \textbf{dd}$_2$ in   Algorithm \refmsetalg }\label{threealg}

  The conditions \textbf{iu}$_3$, \textbf{id}$_3$ and \textbf{dd}$_2$ in the fourth group depend  on both $e_t$ and the  resulting completed PDAGs of the  operators.  Intuitively,   checking  these three conditions requires that we obtain the corresponding resulting completed PDAGs.  We know that the time complexity of getting a resulting completed  PDAG  of $e_t$   is   $O(pn_{e_t})$ \cite{dor1992simple,chickering2002learning}, where $n_{e_t}$ is the number of edges in $e_t$.  To avoid generating resulting completed PDAG, we  provide three algorithms  to  check \textbf{iu}$_3$, \textbf{id}$_3$ and \textbf{dd}$_2$ respectively.

   In these three algorithms, we use the concept of strongly protected edges, defined in Definition \refsprod.   Let  $\Delta_v$ contain   all vertices adjacent to $v$.  To check whether a directed edge $v\to u$ is strongly protected or not in a graph  ${\cal G}$,  from Definition \refsprod, we need to   check    whether   one of
the   four configurations  in Figure \refspro \, occurs in $\cal G$.   This    can be implemented by local search in   $\Delta_v$ and  $\Delta_u$. We know that when a PDAG is sparse, in general, these sets   are small, so it is very efficient to check whether an edge is ``strongly protected".

We are now ready to  provide  Algorithm \ref{alga3}, Algorithm \ref{alga80}, and Algorithm \ref{alga10}   to check  \textbf{iu}$_3$, \textbf{id}$_3$ and \textbf{dd}$_2$  only based on  $e_t$, respectively.  In these three algorithms, we just need to check  whether  a few directed edges   are strongly protected or not in ${\cal P}_{t+1}$,  which has only one or a few edges different from $e_t$.  We prove in Theorem \ref{a3810theo} that these three algorithms are equivalent to   checking     conditions \textbf{iu}$_3$, \textbf{id}$_3$ and \textbf{dd}$_2$, respectively.

\vspace{0.2cm}
\begin{algorithm}[h]
\SetAlgoRefName{1.1.1}
\caption{Check the condition \textbf{iu}$_3$  in Definition \refconstrainedset}\label{alga3}
\SetKwInOut{Input}{input}\SetKwInOut{Output}{output}
\KwIn{a completed PDAG $ e_t $ and a valid operator   on    it: InsertU $x- y$.}
\KwOut{True or False}
\BlankLine
Insert $x- y$ to $e_t $, get the modified PDAG denoted as ${\cal P}_{t+1}$,\\
\For{ each common child $u$ of $x$ and $y$ in ${\cal P}_{t+1}$ } {
\If { either $x\to u$ or $y\to u$  is not strongly protected in ${\cal P}_{t+1}$ }{\Return {False}}}
\Return{True (\textbf{iu}$_3$ holds for InsertU $x- y$)}
\end{algorithm}

\begin{algorithm}[h]
\SetAlgoRefName{1.1.2}
\caption{Check  the  condition \textbf{id}$_3$ in Definition \refconstrainedset}
\label{alga80}
\SetKwInOut{Input}{input}\SetKwInOut{Output}{output}
\KwIn{a completed PDAG ${e_t} $ and  a valid operator: InsertD $x\to y$.}
\KwOut{True or False}
\BlankLine
Insert $x\to y$ to ${e_t} $, get a PDAG, denoted as ${\cal P}_{o}$,\\
\For{each undirected  edge   $u-y$ in ${\cal P}_{o}$, where $u$ is not adjacent to $x$}{ update ${\cal P}_{o}$ by  orienting $u-y$ to $y\to u$, }
\For{each edge   $v \to y$ in ${\cal P}_{o}$} {
\If {$v \to y$ is not strongly protected in ${\cal P}_{o}$ }{update ${\cal P}_{o}$ by changing  $v \to y$ to $v-y$,}}

Set ${\cal P}_{t+1}={\cal P}_{o}$

\For{each common child $u$ of $x$ and $y$ in ${\cal P}_{t+1}$}
 {
\If {  $y\to u$  is not strongly protected in ${\cal P}_{t+1}$ }{\Return {False}}}
\Return{True (\textbf{id}$_3$ holds for InsertD $x\to y$)}
\end{algorithm}

\begin{algorithm}[h]
\SetAlgoRefName{1.1.3}
\SetKwInOut{Input}{input}\SetKwInOut{Output}{output}
\caption{Check the condition \textbf{dd}$_2$  in Definition \refconstrainedset}
\label{alga10}

\KwIn{a completed PDAG ${e_t} $ and a valid operator DeleteD $x\to y$}
\KwOut{True or False}
\BlankLine
Delete $x\to y$ from ${e_t} $, get a PDAG, denoted as ${\cal P}_{t+1}$\;
\For{ each parent $v$ of  $y$ in ${\cal P}_{t+1}$} {
\If {  $v\to y$  is not strongly protected in ${\cal P}_{t+1}$ }{\Return {False}}}
\Return {True (\textbf{dd}$_2$ holds for DeleteD $x\to y$)}
\end{algorithm}

\begin{theorem}[Correctness of Algorithms \ref{alga3}, \ref{alga80} and \ref{alga10}]
\label{a3810theo}
 Let $e_t$ be a completed PDAG.   We have the following results.
  \begin{description}
    \item[(i) ]  Let InsertU $x-y$ be any valid  operator of $e_t$, then condition \textbf{iu}$_3$ holds for the operator InsertU $x- y$ if and only if the output of Algorithm \ref{alga3} is  True.
    \item[(ii) ]  Let InsertD $x\to y$ be any valid  operator of $e_t$, then condition \textbf{id}$_3$ holds for the operator InsertD $x\to y$  if and only if the output of Algorithm \ref{alga80} is  True.
    \item[(iii)] Let  DeleteD $x\to y$ be any valid  operator of $e_t$, then condition \textbf{dd}$_2$ holds for the operator  DeleteD $x\to y$  if and only if the output of Algorithm \ref{alga10} is  True.
  \end{description}
\end{theorem}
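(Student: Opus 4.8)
The plan is to prove each of the equivalences (i)--(iii) by reducing the global condition, which Definition \refconstrainedset\ states in terms of the resulting completed PDAG, to the purely local strong-protection test that the algorithm performs on the modified graph ${\cal P}_{t+1}$. Write ${\cal C}_{t+1}$ for the completed PDAG obtained from ${\cal P}_{t+1}$ via Algorithm \ref{pdag2dag} and Algorithm \ref{dag2cpdag}. Two facts drive the argument: (a) strong protection (Definition \refsprod) is a \emph{local} property, depending only on the neighborhoods $\Delta_x$ and $\Delta_y$ of the endpoints and hence insensitive to edge changes far from the active region; and (b) in a completed PDAG a directed edge is present exactly when it is compelled, and Algorithm \ref{dag2cpdag} labels an edge compelled precisely when it ends up strongly protected. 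Together these let me replace ``the edge $a\to b$ is directed in ${\cal C}_{t+1}$'' by ``$a\to b$ is strongly protected in ${\cal P}_{t+1}$,'' provided ${\cal P}_{t+1}$ already carries the correct local skeleton and v-structures.

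First I would extract from Definition \refconstrainedset\ exactly which directed edges each condition constrains: \textbf{iu}$_3$ asks that for every common child $u$ of $x$ and $y$ the edges $x\to u$ and $y\to u$ remain compelled after InsertU $x-y$; \textbf{id}$_3$ asks that $y\to u$ remain compelled for every common child $u$ after InsertD $x\to y$; and \textbf{dd}$_2$ asks that every surviving parent edge $v\to y$ remain compelled after DeleteD $x\to y$. Because the operator is valid (Lemma \refchickervalid), ${\cal P}_{t+1}$ admits a consistent extension and agrees with ${\cal C}_{t+1}$ in skeleton and in v-structures outside the active region, so Algorithm \ref{pdag2dag} and Algorithm \ref{dag2cpdag} can only change orientations near the active vertices; by (a)--(b) the surviving task is a strong-protection question in ${\cal P}_{t+1}$.

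Parts (iii) and (i) are the clean cases. For \textbf{dd}$_2$ (Algorithm \ref{alga10}), deleting $x\to y$ removes a single edge and creates no new v-structure among the remaining parents of $y$, so ${\cal P}_{t+1}$ already inherits the correct local skeleton and v-structures of ${\cal C}_{t+1}$; I would then show that a surviving parent edge $v\to y$ is compelled in ${\cal C}_{t+1}$ iff it is strongly protected in ${\cal P}_{t+1}$, which is exactly the test in the algorithm's loop over the parents of $y$. For \textbf{iu}$_3$ (Algorithm \ref{alga3}), inserting the undirected edge $x-y$ leaves the v-structure at each common child $u$ intact in ${\cal P}_{t+1}$, and the same local argument gives that $x\to u$ and $y\to u$ are compelled in ${\cal C}_{t+1}$ iff both are strongly protected in ${\cal P}_{t+1}$, matching the algorithm's loop over common children.

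The main obstacle is part (ii), since InsertD $x\to y$ can create a fresh v-structure at $y$ and trigger a cascade of forced orientations before the completed PDAG near $y$ stabilizes; the naively modified graph does \emph{not} yet carry the correct local structure, which is precisely why Algorithm \ref{alga80} runs two preprocessing loops before testing. The crux is to verify that these loops --- orienting $u-y$ to $y\to u$ for every $u-y$ with $u$ not adjacent to $x$, and then demoting to undirected any $v\to y$ that fails to be strongly protected --- reconstruct exactly the orientations that Algorithm \ref{pdag2dag} and Algorithm \ref{dag2cpdag} would assign to the edges incident to $y$ in ${\cal C}_{t+1}$. The delicate step is to show that this local fixpoint computation captures every orientation force that can reach the edges $y\to u$ into the common children, i.e.\ that no orientation compelled outside $\Delta_y$ can flip their status without first disturbing a strong-protection configuration already visible in ${\cal P}_{t+1}$; I expect an induction following the edge ordering used by Algorithm \ref{dag2cpdag}, localized to the neighborhood of $y$, to close this gap. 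Once ${\cal P}_{t+1}$ is shown to agree with ${\cal C}_{t+1}$ near $y$, the final common-child loop reduces \textbf{id}$_3$ to the same strong-protection test as in (i) and (iii), and the three equivalences follow by reading off each algorithm's return value against its condition.
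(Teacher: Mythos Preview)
Your high-level structure is right, but the argument rests on a locality claim that fails: you assert that passing from ${\cal P}_{t+1}$ to ${\cal C}_{t+1}$ ``can only change orientations near the active vertices.'' This is false. When an edge is inserted or deleted, orientation changes can propagate arbitrarily far along directed paths; the paper's Lemma~\ref{redecs} gives the precise reach for insertion (any $v\to u$ that loses its orientation must have directed paths from both $x$ and $y$ to $u$), and Lemma~\ref{fdel} gives the analogue for deletion. Consequently a configuration witnessing strong protection of $x\to u$ in ${\cal P}_{t+1}$ need not survive in ${\cal C}_{t+1}$, and your reduction ``strongly protected in ${\cal P}_{t+1}$ $\Leftrightarrow$ compelled in ${\cal C}_{t+1}$'' does not follow from locality alone.

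In particular, you call (i) and (iii) ``the clean cases,'' but the ``If'' direction of (i) is actually the longest and most delicate part of the paper's proof. It proceeds by case analysis on the four strong-protection configurations of Figure~\refsprog: for configurations $(a)$, $(b)$, $(d)$ it uses Lemma~\ref{redecs} together with the validity condition $\Pi_x=\Pi_y$ and Lemma~\ref{cq} to show the witness edges survive in $e_{t+1}$; for configuration $(c)$ (the chain $x\to u_1\to u$ with $x,u$ nonadjacent) it requires a further inductive argument running through a sequence $u_1,u_2,\ldots,u_k$ of common children of $x$ and $y$, terminating only because the vertex set is finite. None of this is captured by your locality sketch. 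For the ``Only if'' direction of (i) the paper invokes Lemma~\ref{re0str}, which says that when the operator lies in $InsertU_{\cal C}$ the modified graph ${\cal P}_{t+1}$ already \emph{is} the completed PDAG --- a structural fact you would need to reprove or cite. Part (iii) likewise relies on Lemma~\ref{fdel} to control which edges can change. Part (ii) requires an auxiliary lemma (Lemma~\ref{a8theo0}) establishing that the edges $v\to y$ and $y\to u$ (for $u$ not a common child) in ${\cal P}_{t+1}$ persist in $e_{t+1}$, before the same style of case analysis can go through; your proposed ``induction following the edge ordering of Algorithm~\ref{dag2cpdag}'' would have to rediscover exactly these facts.
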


In Theorem \ref{a3810theo}, we show  that     an algorithm (Algorithm \ref{alga3}, Algorithm \ref{alga80}, or Algorithm \ref{alga10}) returns True for an operator if and only if the  corresponding condition (\textbf{iu}$_3$, \textbf{id}$_3$ or \textbf{dd}$_2$) holds for the operator. Theorem \ref{a3810theo} says that we do not have to examine the resulting completed PDAG to check conditions \textbf{iu}$_3$, \textbf{id}$_3$ and \textbf{dd}$_2$, which saves much computation time.

\section{Proofs}\label{proofs}

We will provide   a proof  of Theorem \ref{a3810theo}  in Subsection \ref{proof2}  below. Notice that we present Theorem \ref{a3810theo} in Subsection \ref{threealg} to show the correctness of Algorithm \refalgai, Algorithm \refalgaii \, and Algorithm \refalgaiii.

\subsection{Proof of Theorem \ref{a3810theo}  introduced in Subsection \ref{threealg}}\label{proof2}

To prove Theorem \ref{a3810theo}, we  need the following lemmas that have been introduced in the paper \cite{heaos}.

 \setcounter{lemma}{5}

 \begin{lemma}\label{re3}
 For any   operator $o\in {\cal O}_{{\cal C}}$  denoted by ``InsertD $x\to y$",  the  operator ``DeleteD $x\to y$" is  the reversible operator of $o$.
\end{lemma}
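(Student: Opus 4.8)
The plan is to reduce the statement to a fact about consistent extensions (DAGs) and then transfer it back to completed PDAGs using the uniqueness of the completed PDAG of an equivalence class. Write $\mathcal{C}'$ for the resulting completed PDAG of the valid operator $o=$ ``InsertD $x\to y$'' applied to $\mathcal{C}$. By Chickering's construction (Algorithms \ref{pdag2dag} and \ref{dag2cpdag}, illustrated in Example 2), $\mathcal{C}'$ is the completed PDAG of a DAG $D'$ that is a consistent extension of the modified graph $\mathcal{C}\cup\{x\to y\}$. First I would produce a ``good'' consistent extension: using the validity of InsertD $x\to y$ together with Lemma \refchickervalid, I would exhibit a consistent extension $D$ of $\mathcal{C}$ in which $x$ and $y$ are non-adjacent and for which $D':=D\cup\{x\to y\}$ is acyclic and is a consistent extension of $\mathcal{C}'$. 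In other words, at the DAG level the two classes are witnessed by a single pair $D,D'$ differing only in the arc $x\to y$.

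Next I would check that ``DeleteD $x\to y$'' is actually applicable to $\mathcal{C}'$, i.e.\ that $x\to y$ is a directed (compelled) edge of $\mathcal{C}'$. This follows because the validity conditions for InsertD guarantee that the inserted arc is compelled in $\mathcal{C}'$ (it keeps the same orientation in every DAG of the equivalence class of $D'$); by Chickering's characterization it therefore appears directed in $\mathcal{C}'$. I would then verify that DeleteD $x\to y$ meets the validity conditions for deleting a directed edge and lies in $\mathcal{O}_{\mathcal{C}'}$ (the relevant \textbf{dd} conditions of Definition \refconstrainedset), so that it is a legitimate operator of $\mathcal{C}'$ whose modified graph is $\mathcal{C}'$ with the arc $x\to y$ removed.

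I would then compute the resulting completed PDAG of DeleteD $x\to y$ on $\mathcal{C}'$. Taking the DAG $D'$ of the first step as the consistent extension, deleting $x\to y$ returns exactly the DAG $D$, whose completed PDAG is $\mathcal{C}$ (because $D$ is a consistent extension of $\mathcal{C}$ and the completed PDAG of an equivalence class is unique). Since the resulting completed PDAG of an operator does not depend on the particular consistent extension chosen --- this well-definedness is established earlier in the paper --- the resulting completed PDAG of DeleteD $x\to y$ applied to $\mathcal{C}'$ is $\mathcal{C}$ itself. Hence DeleteD $x\to y$ carries $\mathcal{C}'$ back to $\mathcal{C}$, which is precisely the reverse of InsertD $x\to y$; uniqueness of the reverse follows because an operator is determined by the edge it deletes and by $\mathcal{C}'$, and no other deletion could restore the missing adjacency $x-y$ together with the correct v-structures.

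The hard part will be the bookkeeping in the first step: I must ensure that the reorientations forced by the re-completion routine (Algorithm \ref{dag2cpdag}) when the arc $x\to y$ is inserted are exactly inverted when the same arc is removed. Concretely this requires showing that $D$ and $D'$ have the same skeleton apart from $x-y$ and that their v-structures, and therefore their sets of strongly protected edges (Definition \refsprod), differ only in the controlled way caused by that single arc. Once this local comparison is pinned down, the trivial reversibility of ``add versus delete a single arc between a fixed non-adjacent pair'' at the DAG level transfers to the completed-PDAG level through the uniqueness of completed PDAGs, completing the argument.
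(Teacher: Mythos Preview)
The supplement you are looking at does not actually contain a proof of Lemma~\ref{re3}; it is only restated here as one of the lemmas ``introduced in the paper \cite{heaos}'' that are needed for the proof of Theorem~\ref{a3810theo}. So there is no proof in the provided text to compare against, and I can only assess your proposal on its own merits.

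Your overall strategy---pass to a pair of DAGs $D$ and $D'=D\cup\{x\to y\}$, use the trivial reversibility of ``add an arc / delete an arc'' at the DAG level, and lift back via uniqueness of completed PDAGs---is the natural one. But there is a real gap at the lifting step, and it is exactly the place where the hypothesis $o\in\mathcal{O}_{\mathcal C}$ (i.e.\ condition \textbf{id}$_3$, not merely Chickering validity) has to be used. To apply Chickering's procedure to the operator ``DeleteD $x\to y$'' on $\mathcal C'$ with $D$ as the witness, you need $D$ to be a \emph{consistent extension of the modified graph} $\mathcal C'\setminus\{x\to y\}$, which in particular requires that $D$ and $\mathcal C'\setminus\{x\to y\}$ have the same v-structures. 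Now $D$ has the v-structures of $\mathcal C$, and every common child $c$ of $x$ and $y$ in $\mathcal C$ gives a v-structure $x\to c\leftarrow y$ there. On the $\mathcal C'$ side this triple reappears as a v-structure of $\mathcal C'\setminus\{x\to y\}$ \emph{only if} $y\to c$ is still directed in $\mathcal C'$---and that is precisely what \textbf{id}$_3$ asserts. If \textbf{id}$_3$ fails, $y\to c$ may become undirected in $\mathcal C'$, the v-structure sets disagree, $D$ is not a consistent extension of $\mathcal C'\setminus\{x\to y\}$, and your argument collapses. Your sketch invokes only Lemma~\refchickervalid\ (validity), never \textbf{id}$_3$; as written it would ``prove'' reversibility for every valid InsertD, which is exactly the pitfall Example~3 and Definition~\refconstrainedset\ are designed to rule out.

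Two smaller points. First, your step ``verify that DeleteD $x\to y$ lies in $\mathcal O_{\mathcal C'}$ (the relevant \textbf{dd} conditions)'' is placed before you know the resulting completed PDAG; but \textbf{dd}$_2$ is a condition \emph{on} that resulting graph, so the order must be: first show the result is $\mathcal C$, then read off \textbf{dd}$_2$ from the comparison of parents of $y$ in $\mathcal C$ and $\mathcal C'$. Second, the claim that $x\to y$ is compelled in $\mathcal C'$ does hold, but it is not a tautology from ``validity''; you should argue it (e.g.\ via the v-structure or parent conditions that make InsertD, rather than InsertU, the applicable operator). Once you insert \textbf{id}$_3$ at the v-structure comparison and reorder the \textbf{dd}$_2$ check, the DAG-level argument goes through.
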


 \setcounter{lemma}{11}
  \begin{lemma}\label{cq}
 Let graph ${\cal C}$ be a completed PDAG, $\{w,v,u\}$ be three vertices that are adjacent each other in  ${\cal C}$. If there are two undirected edges in $\{w,v,u\}$, then the third edge is also undirected.
 \end{lemma}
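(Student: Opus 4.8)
The plan is to argue by contradiction after first normalizing the configuration. In the triangle on $\{w,v,u\}$ any two of the three edges meet in exactly one vertex, so after relabelling I may assume the two undirected edges are $w-v$ and $v-u$, sharing the vertex $v$. Because the three vertices are pairwise adjacent, an edge between $w$ and $u$ certainly exists; what must be shown is that it carries no orientation. Suppose instead it were directed. By the symmetry that exchanges the roles of $w$ and $u$ it is enough to treat the case $w \to u$ and to reach a contradiction.

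The single property I would invoke is that a completed PDAG contains no partially directed (semi-directed) cycle; equivalently, it is a chain graph, so no compelled arrow may join two vertices that are linked by a path of undirected edges. This is the standard structural characterization of essential graphs and is available in the form used for completed PDAGs in \cite{heaos}. Granting it, the contradiction is one line: the arrow $w \to u$ together with the undirected edges $u-v$ and $v-w$ closes the partially directed cycle $w \to u - v - w$, which a completed PDAG forbids; the symmetric case $u \to w$ closes the cycle $u \to w - v - u$. Hence the edge between $w$ and $u$ cannot be oriented, i.e. it is undirected, which is exactly the assertion.

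The step I expect to be the real obstacle is recognizing that \emph{no} purely local argument inside the triangle can succeed, so the global chain-graph property is genuinely indispensable. Three of the natural local tools are all inconclusive here. First, strong protection of the lone arrow $w \to u$ (Definition \refsprod) can be furnished by a vertex lying outside $\{w,v,u\}$, for instance an external $c$ with $c \to w$ and $c$ not adjacent to $u$, so checking the four strong-protection patterns of $w \to u$ does not close the case. Second, the flag rule forbidding an induced $a \to b - c$ with $a$ and $c$ non-adjacent is vacuous, since all three vertices are pairwise adjacent. Third, Meek-style orientation propagation from $w \to u$ never fires on $w-v$ or $v-u$, because the relevant rule would require $w$ and $v$ to be non-adjacent. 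Thus the impossibility is not detectable edge by edge but only through the acyclicity of the whole graph, which is precisely what the chain-graph property supplies; a fully self-contained derivation would amount to re-proving that characterization and could not be reduced to these three vertices in isolation, so I would cite it and keep the short cycle argument as the headline proof.
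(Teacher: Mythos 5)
Your proof is correct and is essentially the argument the authors rely on: the supplement only restates Lemma \ref{cq} (its proof is imported from the main paper \cite{heaos}), where the point is exactly your one-line observation that orienting the third edge, say $w\to u$, closes the partially directed cycle $w\to u - v - w$, contradicting the fact that a completed PDAG is a chain graph (the Andersson--Madigan--Perlman characterization of essential graphs). Your additional discussion of why purely local tools (strong protection, flags, Meek-style propagation) cannot settle it is accurate but not needed once the chain-graph property is invoked.
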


 \setcounter{lemma}{13}
\begin{lemma}\label{redecs}

Let ${\cal C}$ be any completed PDAG, and let $\cal P$ denote the PDAG that results from adding a new edge  between $x$ and $y$. For any edge $v\to u$ in ${\cal C}$ that does not occur in the resulting completed PDAG extended from $\cal P$, there is a  directed path  of length zero or more from both $x$ and $y$ to $u$ in ${\cal C}$.
\end{lemma}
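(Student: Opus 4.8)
The plan is to argue entirely in terms of the \emph{strong protection} of the edge $v\to u$, using two standard properties of the completion procedure (Algorithms \ref{pdag2dag} and \ref{dag2cpdag}): the resulting completed PDAG, call it ${\cal C}'$, extended from ${\cal P}$ has the \emph{same skeleton and the same v-structures} as ${\cal P}$, and a directed edge occurs in ${\cal C}'$ if and only if it is strongly protected in ${\cal C}'$ (Definition \refsprod). Since ${\cal P}$ is obtained from ${\cal C}$ only by adding the edge between $x$ and $y$, the skeleton of ${\cal C}'$ is that of ${\cal C}$ together with $(x,y)$, every old edge is still present, and every directed edge of ${\cal C}$ is still directed the same way in ${\cal P}$. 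I will call a directed edge $a\to b$ of ${\cal C}$ \emph{lost} if it is not directed $a\to b$ in ${\cal C}'$, i.e.\ it becomes undirected or reversed, and I will prove that every lost edge $v\to u$ has a directed path of length $\ge 0$ from each of $x$ and $y$ to $u$ in ${\cal C}$.

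The main device will be an induction on a \emph{derivation depth} $\rho(a\to b)$ assigned to each directed edge of ${\cal C}$: edges belonging to a v-structure receive $\rho=0$, while an edge forced by one of the three non-immorality configurations of Figure \refspro\ (the rule $w\to a\to b$ with $w\not\sim b$; the rule $a\to c\to b$ with the chord $a-b$; and the two-parent rule in which $b$ has two non-adjacent parents that are undirected neighbours of $a$) receives $\rho$ one larger than the greatest depth of the directed edges used as premises in its cheapest such configuration. Because ${\cal C}$ is a completed PDAG, every directed edge is strongly protected, so $\rho$ is well defined and in the cheapest non-immorality configuration the premise edges have \emph{strictly} smaller depth. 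The observation that drives the induction is that if $v\to u$ is lost, then \emph{every} strong-protection configuration of $v\to u$ that held in ${\cal C}$ must fail in ${\cal C}'$; otherwise one surviving configuration would strongly protect $v\to u$ and force it to remain $v\to u$. Such a failure can arise in only two ways: a required non-adjacency of the configuration is destroyed by the new edge, so that the missing pair is exactly $\{x,y\}$ (a \emph{fill}), or a directed premise edge of the configuration is itself lost.

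I would first dispose of the base case $\rho(v\to u)=0$, where $v\to u$ is protected by a v-structure $v\to u\leftarrow w$ with $v\not\sim w$. Since the v-structures of ${\cal C}'$ coincide with those of ${\cal P}$, this v-structure survives into ${\cal C}'$ unless $v$ and $w$ become adjacent; as $v\to u$ is lost it cannot survive, so the new edge is $(v,w)$, giving $\{x,y\}=\{v,w\}$ and hence the length-one directed paths $x\to u$ and $y\to u$. Note this case needs no recursion: the coincidence of v-structures forces a fill directly. For $\rho(v\to u)\ge 1$ I would pick the cheapest configuration $K$; since $K$ fails, either a non-adjacency of $K$ is filled by $(x,y)$ or a premise edge $e'$ of $K$ is lost. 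In the fill case one checks, configuration by configuration, that the two endpoints of the filled pair already reach $u$ by short directed paths in ${\cal C}$ (for the rule $w\to v\to u$ with $w\not\sim u$ this uses $u\in\{x,y\}$ together with the path $w\to v\to u$; for the two-parent rule it is immediate, the two parents being $x$ and $y$). In the premise case $e'$ has strictly smaller depth, so the induction hypothesis supplies directed paths from $x$ and $y$ to the head of $e'$; because $e'$ and $v\to u$ sit in a fixed configuration, the head of $e'$ is either $u$ itself or a vertex joined to $u$ by a directed edge of ${\cal C}$ (for instance the head $v$ of a premise $w\to v$, followed by $v\to u$), so both paths extend to $u$.

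The hard part will be the exhaustive, case-by-case verification in the last paragraph: for each configuration of Figure \refspro\ I must confirm that ``the configuration fails'' reduces either to a fill equal to $\{x,y\}$ or to the loss of a strictly-lower-depth premise, and that in every reduction the guaranteed path(s) extend to the correct head $u$ while preserving \emph{both} $x$ and $y$ as sources. The delicate points are ensuring that $\rho$ is genuinely well-founded and that the immorality case is treated as a true base case rather than recursing on a premise of equal depth (which the v-structure–preservation fact lets me avoid); Lemma \ref{cq} may assist the configuration bookkeeping by constraining which orientations can coexist inside a triangle.
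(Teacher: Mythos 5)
There is a genuine gap, and it sits precisely where your last paragraph admits the ``hard part'' lies. The most concrete problem is that your failure taxonomy is incomplete for the two-parent configuration (configuration $(d)$ of Figure \refsprog\, in the paper \cite{heaos}): its premises include the \emph{undirected} edges $v-w_1$ and $v-w_2$, and in ${\cal C}'$ this configuration can fail in a third way you do not list, namely one of these undirected edges becoming directed. Inserting an edge really does create newly compelled edges (a new v-structure at an endpoint of the inserted edge propagates orientations by the very rules you are using), so this mode cannot be dismissed. Your induction hypothesis covers only directed edges of ${\cal C}$ that are lost, and it delivers paths only to the head of a lost \emph{directed} premise; when configuration $(d)$ fails because $v-w_1$ is oriented in ${\cal C}'$, the recursion has nothing to stand on. Repairing this requires a companion statement about undirected edges of ${\cal C}$ that become oriented in ${\cal C}'$ --- a mutual induction that is essentially a second lemma, not bookkeeping. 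A related soft spot: the step ``one surviving configuration would strongly protect $v\to u$ and force it to remain $v\to u$'' is not immediate, because strong protection is a constraint on edges that are already directed in ${\cal C}'$. For each configuration you must separately exclude both $v-u$ undirected and the reversal $u\to v$: for $(a)$, undirectedness violates the no-flag condition of the essential-graph characterization, while reversal would create a v-structure $u\to v\leftarrow w$ absent from ${\cal P}$, contradicting v-structure preservation; for $(c)$, both alternatives produce semi-directed cycles; for $(d)$ one needs Lemma \ref{cq} together with the chain-graph condition. These per-configuration forcing arguments carry real content and are only asserted in your write-up.

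The second gap is foundational: you have not shown that $\rho$ is well defined. That every directed edge of a completed PDAG is strongly protected is a \emph{simultaneous}, fixed-point property of the graph; by itself it does not preclude two edges protecting each other circularly, in which case no ``cheapest configuration with strictly smaller premise depth'' exists and the induction has no floor. Finiteness of $\rho$ on every compelled edge is equivalent to the completeness of the orientation rules corresponding to configurations $(a)$, $(c)$, $(d)$ --- i.e., that the directed edges of a completed PDAG are exactly the closure of the v-structure orientations under these rules. That is a genuine theorem (Meek's completeness result; see also Andersson, Madigan and Perlman), not a consequence of Definition \refsprod, and it must be invoked or proved before your base case and inductive step even make sense. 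Both defects look repairable --- cite the completeness theorem to ground $\rho$, and add the mutual induction over newly oriented undirected edges --- but as written the proof does not go through, and since this supplement only restates Lemma \ref{redecs} from \cite{heaos} without reproducing its proof, the burden of these missing steps falls entirely on your argument.
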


 \setcounter{lemma}{14}
 \begin{lemma}\label{re0str}

Let   $ {InsertU}_{\cal C}$ and $  {DeleteU}_{\cal C}$ be the  operator sets defined in Definition \refconstrainedset\, in the paper \cite{heaos}. For any $o$ in ${InsertU}_{\cal C}$ or in $ {DeleteU}_{\cal C}$,  where   ${\cal P}'$ is the modified graph of $o$ that is obtained by applying $o$ to  ${\cal C}$, we have that  ${\cal P}'$ is a completed PDAG.
\end{lemma}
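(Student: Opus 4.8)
The approach I would take is to check ${\cal P}'$ against the definition of a completed PDAG directly, namely that every directed edge of ${\cal P}'$ is compelled (strongly protected in the sense of Definition \refsprod) and every undirected edge of ${\cal P}'$ is reversible. The key simplification is that both InsertU and DeleteU modify only the undirected part of ${\cal C}$, so the set of directed edges of ${\cal P}'$ equals that of ${\cal C}$; in particular ${\cal P}'$ contains no directed cycle and is automatically a chain graph. Thus the whole argument reduces to two points: that no directed edge loses its strong protection, and that no undirected edge of ${\cal P}'$ is forced to become directed (equivalently, that ${\cal P}'$ creates no new flag $a\to b-c$ and its chain components stay chordal).

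For an operator InsertU $x-y$ in $InsertU_{\cal C}$, where ${\cal P}'$ adds the undirected edge $x-y$, validity (Lemma \refchickervalid) gives $\Pi_x=\Pi_y$. I would first show that no new flag arises: the new edge can only play the role of the undirected part $b-c$ of a flag, so a new flag would read $a\to x$, $x-y$ with $a$ not adjacent to $y$ (or the symmetric version); but $a\to x$ gives $a\in\Pi_x=\Pi_y$, hence $a\to y$ and $a$ is adjacent to $y$, a contradiction. Since adding an edge only increases adjacencies, it can never create a new non-adjacency, so this is the only way a flag could appear, and we conclude no undirected edge is forced to be directed. For strong protection, the only directed edges whose protecting configuration in Figure \refspro can be disturbed are those whose configuration used the non-adjacency of $x$ and $y$; tracing the four configurations and using $\Pi_x=\Pi_y$, I would show every such edge is an arrow into a common child $u$ of $x$ and $y$ (the v-structure $x\to u\leftarrow y$ being destroyed once $x,y$ are joined). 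Condition \textbf{iu}$_3$, i.e. the ``True'' output of Algorithm \refalgai, is exactly the guarantee that $x\to u$ and $y\to u$ remain strongly protected in ${\cal P}'$, and Lemma \ref{cq} together with the parent-set identity is used to propagate this protection to the remaining arrows into $u$.

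For an operator DeleteU $x-y$ in $DeleteU_{\cal C}$, where ${\cal P}'$ removes the undirected edge $x-y$, I would first note that, because $x-y$ is undirected in the completed PDAG ${\cal C}$, Lemma \ref{cq} and the absence of flags in ${\cal C}$ again force $\Pi_x=\Pi_y$. Deleting an edge can only create non-adjacencies, so the danger here is the opposite one: a new v-structure or flag appearing at a common neighbour of $x$ and $y$, which would direct some undirected edge, or a chordless cycle opening up when the chain component is split. Using Lemma \ref{cq} I would show that the common neighbours of $x$ and $y$ together with $x-y$ form a complete undirected set in ${\cal C}$, so its removal leaves a chordal remainder and opens no chordless cycle; the conditions defining $DeleteU_{\cal C}$ in Definition \refconstrainedset are precisely those ruling out a newly created v-structure, while each directed edge incident to $x$ or $y$ is protected through a parent in $\Pi_x=\Pi_y$ rather than through the edge $x-y$, so no arrow loses its protection.

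The main obstacle is the insertion case, and specifically the fact that it is not enough to control the arrows $x\to u$ and $y\to u$ checked by \textbf{iu}$_3$: a third arrow $a\to u$ into a common child, where $a$ is a common neighbour of $x$ and $y$, may have been strongly protected in ${\cal C}$ only through the configuration that uses the pair $\{x,y\}$ and is therefore threatened by the insertion. The crux of the proof is to show that once $x\to u$ and $y\to u$ are known to be strongly protected in ${\cal P}'$, every other arrow into $u$ inherits a protecting configuration as well; this is where Lemma \ref{cq}, the identity $\Pi_x=\Pi_y$, and the completed-PDAG structure of ${\cal C}$ must be combined carefully. Once this propagation step is in place, the directed edges are all strongly protected, no undirected edge is forced, and the symmetric, simpler deletion analysis completes the proof that ${\cal P}'$ is a completed PDAG.
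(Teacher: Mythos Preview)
This supplement does not contain a proof of Lemma~\ref{re0str}; it is listed among the lemmas ``introduced in the paper~\cite{heaos}'' and is \emph{invoked}, not proved, in the ``Only if'' part of Theorem~\ref{a3810theo}(i). So there is nothing in the present document to compare your argument against directly.

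Your overall plan---verify the Andersson--Madigan--Perlman axioms for a completed PDAG on ${\cal P}'$, using that InsertU/DeleteU touch only the undirected part---is the natural one, and you correctly isolate the arrows into common children of $x$ and $y$ as the only ones whose strong protection is threatened. But there is a circularity in the InsertU half. You write that \textbf{iu}$_3$ is ``the `True' output of Algorithm~\refalgai'', i.e.\ the assertion that $x\to u$ and $y\to u$ are strongly protected in the modified graph ${\cal P}'$. However, \textbf{iu}$_3$ in Definition~\refconstrainedset\ is formulated in terms of the \emph{resulting} completed PDAG $e_{t+1}$, not ${\cal P}'$; the equivalence of these two descriptions is precisely Theorem~\ref{a3810theo}(i), and the ``Only if'' direction of that theorem is proved \emph{by appealing to} Lemma~\ref{re0str}. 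You therefore cannot import the Algorithm~\refalgai\ characterization of \textbf{iu}$_3$ into a proof of Lemma~\ref{re0str} without begging the question. To close the gap you must start from \textbf{iu}$_3$ as a statement about $e_{t+1}$ and argue directly that the arrows $x\to u$, $y\to u$ (and then, via the propagation you allude to, every other arrow into $u$) are strongly protected in ${\cal P}'$; this is exactly the step you flag as ``the crux of the proof'' but do not carry out. Two smaller omissions: for InsertU you still owe an argument that the enlarged chain component remains chordal (this is where the undirected-path clause of the validity condition in Lemma~\refchickervalid\ is used), and for DeleteU the claim that the common neighbours of $x$ and $y$ form a complete undirected set does not follow from Lemma~\ref{cq} alone---you need the clique condition on $N_{xy}$ from the validity hypothesis.
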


 \setcounter{lemma}{16}
  \begin{lemma}\label{fdel}
If the graph ${\cal P}_1$ obtained by deleting  $a\to b$ from a  completed PDAG ${\cal C}$ can be extended to a new completed PDAG, ${\cal C}_1$, then we have that for any directed edge $x\to y$ in ${\cal C}$, if $y$ is not $b$ or a descendent of $b$, then $x\to y$ occurs in ${\cal C}_1$.
\end{lemma}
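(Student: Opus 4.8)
The plan is to prove a slightly stronger, cleaner statement: for every vertex $y$ that is neither $b$ nor a descendant of $b$ in ${\cal C}$, the set of compelled edges pointing into $y$ is \emph{identical} in ${\cal C}$ and in ${\cal C}_1$; the lemma is the special case of a single such edge $x\to y$. The backbone is to first localise where ${\cal C}$ and ${\cal C}_1$ can possibly differ, and then argue that this difference cannot reach any $y$ lying outside $\{b\}\cup\mathrm{De}(b)$, where $\mathrm{De}(b)$ denotes the descendants of $b$.

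First I would pin down the skeleton and the v-structures. Since ${\cal C}_1$ is a completed PDAG extended from ${\cal P}_1={\cal C}\setminus\{a\to b\}$, and both the consistent extension (Algorithm \ref{pdag2dag}) and the completed-PDAG construction (Algorithm \ref{dag2cpdag}) preserve the skeleton, ${\cal C}$ and ${\cal C}_1$ share the same skeleton except for the single missing adjacency $a-b$. Because v-structures are invariants of a Markov equivalence class, it suffices to compare the v-structures of ${\cal C}$ with those of ${\cal P}_1$. Deleting the one edge $a\to b$ can only (i) destroy a v-structure whose collider is $b$, namely one of the form $a\to b\leftarrow r$ with $a,r$ non-adjacent, or (ii) create a v-structure $a\to c\leftarrow b$ at a common child $c$ of $a$ and $b$ that was previously shielded by $a-b$; no other unshielded collider is touched, since the only pair becoming non-adjacent is $\{a,b\}$ and the only removed directed edge is $a\to b$. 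Hence every v-structure in which ${\cal C}$ and ${\cal C}_1$ differ has its collider in $\{b\}\cup\mathrm{ch}(b)\subseteq\{b\}\cup\mathrm{De}(b)$. I am confident this step is clean.

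Second, I would transfer this localisation to compelled edges. Fix $y\notin\{b\}\cup\mathrm{De}(b)$. The crucial observation is that no directed ancestor of $y$ in ${\cal C}$ (nor $y$ itself) lies in $\{b\}\cup\mathrm{De}(b)$, since a directed path from $b$ through such an ancestor to $y$ would force $y\in\mathrm{De}(b)$. I would then induct on a topological ordering of ${\cal C}$, showing that the directed edges into each admissible $y$ agree in the two graphs. A directed edge $x\to y$ of ${\cal C}$ is compelled, hence strongly protected, so one of the four configurations of Definition \refsprod\ (Figure \refspro) witnesses it, and each such configuration involves only $y$, the edge $x\to y$, and vertices that are directed ancestors of $y$ with their adjacencies. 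Because all these vertices avoid $\{b\}\cup\mathrm{De}(b)$, the relevant adjacencies are untouched by the deletion (the removed edge $a-b$ has head $b$ and cannot participate), the relevant upstream v-structures are unchanged by the first step, and the inductively earlier directed edges are unchanged by hypothesis; Lemma \ref{cq} rules out the degenerate partially-directed triangles that could otherwise spoil the configuration. Consequently the same configuration strongly protects $x\to y$ in ${\cal C}_1$, forcing the orientation $x\to y$ there, and in particular ruling out both its reversal and its becoming undirected.

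I expect the delicate part to be this second step, specifically the claim that the witnessing strongly-protected configuration lives entirely among $y$ and its directed ancestors, and that the co-parent edges into $y$ itself (the ones appearing in the v-structure and Meek-type configurations) are handled inside the induction rather than assumed. An alternative route I would keep in reserve avoids this by a DAG construction: take a consistent extension ${\cal D}_1$ of ${\cal C}_1$, re-insert the edge to form ${\cal D}={\cal D}_1\cup\{a\to b\}$, show using the first step and the validity of the deletion (no directed path from $b$ to $a$ is forced) that ${\cal D}$ is acyclic and Markov equivalent to ${\cal C}$, and then compare ${\cal D}$ and ${\cal D}_1$, which differ only in the edge $a\to b$ whose head $b$ is outside the ancestry of any admissible $y$. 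In that formulation ancestry is unambiguous, and the remaining obstacle reduces to verifying acyclicity and the exact reconciliation of v-structures at $b$ and its children upon re-insertion; here Lemma \ref{redecs}, applied through the reversibility of DeleteD and InsertD in Lemma \ref{re3}, is the natural instrument for controlling which edges move.
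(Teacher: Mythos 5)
Your Step 1 (localising all v-structure changes to colliders in $\{b\}\cup\mathrm{ch}(b)\subseteq\{b\}\cup\mathrm{De}(b)$) is clean and correct, and it does carry configurations $(a)$ and $(b)$ of Definition \refsprod: for $(b)$ the v-structure $x\to y\leftarrow w$ has collider $y\notin\{b\}\cup\mathrm{ch}(b)$ and so survives into ${\cal C}_1$ outright, and for $(a)$ your topological induction legitimately supplies $w\to x$ (the head $x$ is a parent of $y$, hence outside the forbidden set and strictly earlier), after which $y\to x$ or $x-y$ in ${\cal C}_1$ would force a new collider at $x$, contradicting Step 1. The genuine gap is configuration $(c)$: the witness is $x\to u\to y$ together with $x\to y$, and the edge $u\to y$ is itself a directed edge \emph{into $y$}, i.e.\ at the same stage of your induction, so ``the directed edges into each admissible $y$ agree'' cannot be invoked for it --- the argument as stated is circular. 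You flag this (``handled inside the induction rather than assumed'') but never supply the handling. The missing idea is a secondary argument among the compelled parents of $y$: if $u\to y$ is itself protected only by $(c)$ via $u\to u_1\to y$, pass to $u_1$, and so on; each step moves to a topologically strictly later tail, so by finiteness the chain ends at a parent edge protected by $(a)$, $(b)$ or $(d)$, and the orientations then propagate back down the chain by acyclicity. This is precisely the $u_1,\dots,u_k$ chain device the supplement itself deploys in the proofs of parts (i) and (ii) of Theorem \ref{a3810theo} (cases $(5)$--$(6)$ of Figure \ref{spro1} and case $(4)''$ of Figure \ref{spro21}); without it your proof is incomplete at exactly the decisive step.

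A second, smaller soft spot is configuration $(d)$: your sketch tacitly needs the undirected edges $x-w$ and $x-w_1$ of ${\cal C}$ to survive into ${\cal C}_1$, and Lemma \ref{cq} as you invoke it presupposes this; but deleting $a\to b$ can direct previously undirected edges (the new colliders at $\mathrm{ch}(b)$ propagate), so persistence of undirected edges is not free. The repair again comes from Step 1 rather than Lemma \ref{cq}: the v-structure $w\to y\leftarrow w_1$ survives in ${\cal C}_1$ (collider $y$ is outside $\{b\}\cup\mathrm{ch}(b)$), and if the edge between $x$ and $y$ were $y\to x$ or undirected in ${\cal C}_1$, some consistent extension would orient $w\to x\leftarrow w_1$ (acyclicity, since $w\not\sim w_1$), creating a v-structure with collider $x\notin\{b\}\cup\mathrm{De}(b)$ that ${\cal C}$ lacks --- contradicting Step 1. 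Your reserve DAG-based route (re-inserting $a\to b$ into an extension of ${\cal C}_1$ and comparing) is plausible but equally unexecuted, with the acyclicity and v-structure reconciliation at $b$ left as open obligations. In short: the skeleton of your argument is viable and close to what is needed, but the two places where the real work lies --- the chain argument for $(c)$ and the collider-at-$x$ argument for $(d)$ --- are exactly the places your write-up defers.
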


 \setcounter{lemma}{20}

There are three statements  in Theorem  \ref{a3810theo};   we prove them one by one below.

{\textbf{ Proof of  (i) of Theorem \ref{a3810theo}  }}

\textbf{(If)}

 Figure \refsprog\, shows the four cases that ensure that an edge is strongly protected.  We first show that for any edge $x\to u$ (or $y\to u$) , where $u$ is   a    common child of $x$ and $y$, if $x\to u$ is strongly protected in  ${\cal P}_{t+1}$   by configuration $(a)$, $(b)$, or $(d)$    in Figure \refsprog\, (replace $v\to u$ by  $x\to u$), it is also directed in ${e}_{t+1}$.

Case $(1)$, $(2)$ and $(3)$ in Figure \ref{spro1} show the sub-structures of  ${\cal P}_{t+1}$ in which $x\to u$ is protected by case $(a)$ , $(b)$ and $(d)$ in Figure \refsprog\, respectively, where ${\cal P}_{t+1}$ is the modified graph obtained by inserting $x-y$ into $e_t$.

 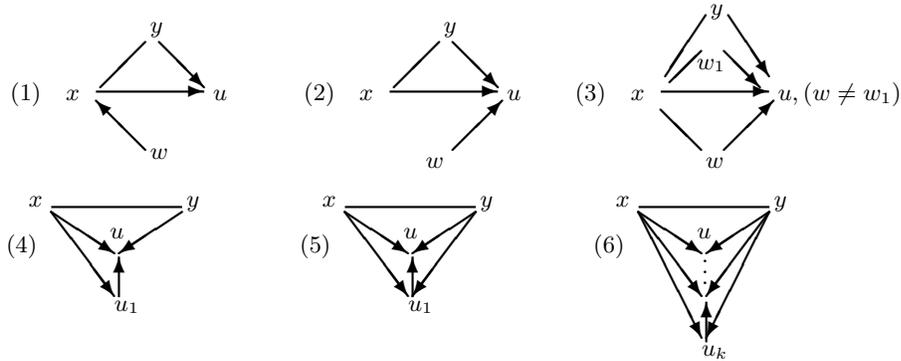
\begin{figure}[H]
 \centering
  \begin{minipage}[t]{0.3\linewidth}
   \centering
 { \unitlength=0.6mm \begin{picture}(50,40)
  \thicklines

\put(0,23){\mbox{$(1)\quad x $}}
\put(45,23){\mbox{$ u$}}
\put(31,38){\mbox{$y$}}
\put(31,10){\mbox{$w$}}

\put(33,36){\vector(1,-1){10}}
\put(30,36){\line(-1,-1){10}}

\put(19,25){\vector(1,0){24}}
\put(30,12){\vector(-1,1){11}}
\end{picture}\\ }
     \end{minipage}
 \begin{minipage}[t]{0.3\linewidth}
   \centering
{ \unitlength=0.6mm
\begin{picture}(50,40)
  \thicklines

\put(0,23){\mbox{$(2)\quad x $}}
\put(45,23){\mbox{$ u$}}
\put(27,8){\mbox{$w$}}
\put(31,38){\mbox{$y$}}

\put(33,36){\vector(1,-1){10}}
\put(30,36){\line(-1,-1){10}}

\put(19,25){\vector(1,0){24}}
\put(33,12){\vector(1,1){11}}

\end{picture}\\ }
  \end{minipage}
    \begin{minipage}[t]{0.3\linewidth}
   \centering
{ \unitlength=0.6mm \begin{picture}(60,40)(0,0)
  \thicklines
\put(0,23){\mbox{$(3)\quad x $}}
\put(45,23){\mbox{$ u, (w\neq w_1)$}}
\put(29,8){\mbox{$w$}}
\put(27,30){\mbox{$ w_1$}}
\put(30,42){\mbox{$ y$}}

\put(29,42){\line(-2,-3){9}}
\put(34,42){\vector(2,-3){9}}

\put(19,25){\vector(1,0){24}}
\put(33,12){\vector(1,1){11}}
\put(28,12){\line(-1,1){9}}

\put(33,34){\vector(1,-1){8}}
\put(28,34){\line(-1,-1){7}}
\end{picture}\\ }
  \end{minipage}

  \begin{minipage}[t]{0.3\linewidth}
   \centering
 { \unitlength=0.6mm \begin{picture}(50,40)
  \thicklines

\put(5,40){\mbox{$x $}}
\put(0,30){\mbox{$(4)$}}
\put(40,40){\mbox{$y$}}
\put(23,33){\mbox{$u$}}
\put(24,17){\mbox{$u_1$}}

\put(10,40){\line(1,0){29}}

\put(10,39){\vector(3,-2){14}}
\put(39,39){\vector(-3,-2){14}}

\put(10,39){\vector(3,-4){14}}
\put(25,20){\vector(0,1){9}}
\end{picture}\\ }
     \end{minipage}
 \begin{minipage}[t]{0.3\linewidth}
   \centering
 { \unitlength=0.6mm \begin{picture}(50,40)
  \thicklines

\put(5,40){\mbox{$x $}}
\put(0,30){\mbox{$(5)$}}
\put(40,40){\mbox{$y$}}
\put(23,33){\mbox{$u$}}
\put(24,17){\mbox{$u_1$}}

\put(10,40){\line(1,0){29}}

\put(10,39){\vector(3,-2){14}}
\put(39,39){\vector(-3,-2){14}}

\put(10,39){\vector(3,-4){14}}
\put(39,39){\vector(-3,-4){14}}
\put(25,20){\vector(0,1){9}}
\end{picture}\\ }
     \end{minipage}
 \begin{minipage}[t]{0.3\linewidth}
   \centering
 { \unitlength=0.6mm \begin{picture}(50,40)
  \thicklines

\put(5,40){\mbox{$x $}}
\put(0,30){\mbox{$(6)$}}
\put(40,40){\mbox{$y$}}
\put(23,33){\mbox{$u$}}
\put(24,7){\mbox{$u_k$}}

\put(10,40){\line(1,0){29}}

\put(10,39){\vector(3,-2){14}}
\put(39,39){\vector(-3,-2){14}}

\put(10,39){\vector(3,-4){14}}
\put(39,39){\vector(-3,-4){14}}
\put(24,23){\mbox{\vdots}}
\put(10,39){\vector(1,-2){14}}
\put(39,39){\vector(-1,-2){14}}
\put(25,10){\vector(0,1){9}}

\end{picture}\\ }
     \end{minipage}
\caption{strongly protected in  ${\mathcal P}_{t+1}$}  \label{spro1}
\end{figure}

If $x\to u$ is protected in ${\cal P}_{t+1}$ like   case $(1)$ in Figure \ref{spro1},  $w\to x\to u$ occurs and $w$ and $u$ are not adjacent in ${\cal P}_{t+1}$. If $w\to x$ is undirected in $e_{t+1} $, from Lemma \ref{redecs}, there exists a directed path from $y$ to $x$. Any parent of $x$ that is in this path must not be a   parent of $y$; otherwise,   there exists a directed cycle from $y$ to $y$ in $e_{t}$. Hence we have that the parent sets  of $y$ and $x$ are not equal.   This is a contradiction of the condition $\Pi_x=\Pi_y$ in Lemma \refchickervalid\, in the paper \cite{heaos}. We have that $w\to x$    and  $ x\to u$  occur in $e_{t+1}$.

If $x\to u$ is protected in ${\cal P}_{t+1}$ by v-structure $x\to u \leftarrow w$,  like case $(2)$  in Figure \ref{spro1}, clearly, the v-structure also   occurs   in  $e_{t+1} $, so $ x\to u$  occurs in $e_{t+1}$.

If $x\to u$ is protected in  ${\mathcal P}_1$ like  case $(3)$ in Figure \ref{spro1}, we have that the v-structure $w\to u\leftarrow w_1$ also occurs in $e_{t+1} $. If either $x-u$ or $u\to x$ is in $e_{t+1}$, we have that $w_1\to x$ and $w\to x$ are both in $e_{t+1}$ and the v-structure $w_1\to x\leftarrow w$ occurs. Hence  we have have   that  $x\to u$  occur in $e_{t+1} $.

Now we show that if $x\to u$ is protected in  ${\mathcal P}_{t+1}$ like $(c)$ in Figure \refsprog\,, it is also protected in  $e_{t+1}$. For any $u_1$ in $x\to u_1\to u$, there are  only two cases: $u_1$ and $y$ are adjacent or nonadjacent.

When $u_1$ and $y$ are not adjacent, like (4) in Figure \ref{spro1}, there is a v-structure $u_1\to u\leftarrow y$ in ${\mathcal P}_{t+1}$. Then $u_1\to u$ occurs in $e_{t+1}$. If $x-u$ occurs in $e_{t+1}$,  by   Lemma \ref{cq}, the edge between $x$ and $u_1$ must be directed and oriented as $u_1\to x$ in $e_{t+1}$.   This is   impossible,   because there exists some extension of ${\mathcal P}_{t+1}$  that   has an edge  oriented as  $x\to u_1$. Thus,  $x\to u$ occurs in $e_{t+1} $.

When $u_1$ and $y$ are adjacent, we have that $u_1\to y$ and $u_1-y$ do  not occur in ${\cal P}_{t+1}$ since $P_x=P_y$ must hold in $e_t$ for the validity of the operator InsertU $x-y$.   Hence we have that $y\to u_1$    occurs in ${\mathcal P}_{t+1}$ and $x\to u$ is strongly protected like case (5) in   Figure \ref{spro1}. We consider two cases: $x\to u_1$ occurs or does not occur in  $e_{t+1} $.

 Assume   $x\to u_1$ occurs   in  $e_{t+1} $. If $u_1\to u$ occurs in $e_{t+1} $, clearly, $x\to u$ must occur in $e_{t+1}$ because there is a partially directed path $x\to u_1\to u$ in $e_{t+1}$. If $u_1\to u$ is undirected in $e_{t+1}$, from Lemma \ref{cq}, $x\to u$ must   occur in $e_{t+1} $.

In case (5), we have   that   $u_1$ is also a common child of $x$ and $y$, so, $x\to u_1$ will  also be strongly protected in ${\mathcal P}_{t+1}$ from the condition \textbf{iu}$_3$.
 Now, consider $x\to u_1$;   if it is protected in ${\mathcal P}_{t+1}$ like any of case (1), (2), (3), or (4),  then, by   our proof, $x\to u_1$ occurs in  $e_{t+1} $. Thus, $x\to u$ must occur in $e_{t+1} $. If $x\to u_1$ is protected in ${\mathcal P}_{t+1}$ like case (5), we can find another vertex $u_2$   that    is a common child of $x$ and $y$ like case (6). From the proof above, we know if $x\to u_2$ occurs in $e_{t+1} $, $x\to u_1$ and $x\to u$ also occur in $e_{t+1} $. Since the graph   has    finite vertices, we can find a common child of $x$ and $y$, say $u_k$, such that $x\to u_k$ is protected in  ${\mathcal P}_{t+1}$ like one of cases (1), (2), (3)
  or   (4). Thus, $x\to u_k$ occurs in $e_{t+1}$,   implying that   $x\to u_{k-1}$ occurs in ${\mathcal P}_{t+1}$,  so,     finally, $x\to u$ occurs in ${\mathcal P}_{t+1}$.

\textbf{(Only if)} From Lemma \ref{re0str}, we have that the modified graph  ${\cal P}_{t+1}$ is also the resulting completed PDAG  $e_{t+1}$. Hence, all directed edges in  $e_{t+1}$ are strongly protected in ${\cal P}_{t+1}$, so     the Algorithm \refalgai\, will  return True.

 $\Box$

\noindent {\textbf{ Proof of  (ii) of Theorem \ref{a3810theo} }}

To prove  (ii) of Theorem \ref{a3810theo},   we need following lemma.
\begin{lemma}
\label{a8theo0}
Let $e_t $ be a completed PDAG, ${\cal P}_{t+1}$ be the PDAG obtained in Algorithm \refalgaii\, with input of a valid  operator InsertD $x\to y$, and ${e}_{t+1}$ be the resulting completed PDAG extended from ${\cal P}_{t+1}$.  We have:
 \begin{enumerate}
   \item If $u$ is not a common child of $x$ and $y$, then all directed edges $y\to u$  in ${\cal P}_{t+1}$ are also in $e_{t+1}$.
   \item All directed edges $v\to y$  in ${\cal P}_{t+1}$ are also in ${e}_{t+1}$;
  \end{enumerate}
\end{lemma}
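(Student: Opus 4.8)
The plan is to regard $e_{t+1}$ as the completed PDAG of the Markov equivalence class of ${\cal P}_o:=e_t\cup\{x\to y\}$, and to use Lemma \ref{redecs} as the main engine. First I would record two preliminary observations. (a) The orientations performed in the first loop (turning $u-y$ into $y\to u$ whenever $u$ is not adjacent to $x$) are exactly those forced so as not to create a new v-structure, and the undirections in the second loop only remove edges that are not compelled; hence ${\cal P}_{t+1}$ and ${\cal P}_o$ share the same skeleton and the same v-structures, so the completed PDAG extended from ${\cal P}_{t+1}$ is indeed that of ${\cal P}_o$. (b) Because InsertD $x\to y$ is valid, $x$ and $y$ are nonadjacent in $e_t$ and $x\to y$ is a compelled edge of $e_{t+1}$ (Lemma \refchickervalid\, in the paper \cite{heaos}). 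With (a)--(b), Lemma \ref{redecs} applies with ${\cal C}=e_t$ and added edge $\{x,y\}$: any directed edge of $e_t$ that is undirected in $e_{t+1}$ admits a directed path from both $x$ and $y$ to its head inside $e_t$.

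For statement~(1) I would split the edges $y\to u$ of ${\cal P}_{t+1}$ by origin. If $y\to u$ was produced in the first loop, then $u\not\sim x$, so $x\to y\to u$ with $x,u$ nonadjacent is configuration $(a)$ of Figure \refsprog; thus $y\to u$ is strongly protected and lies in $e_{t+1}$. If instead $y\to u$ was already in $e_t$, suppose for contradiction that $y-u$ is undirected in $e_{t+1}$. Lemma \ref{redecs} gives a directed path from $x$ to $u$ in $e_t$ of length at least one (length zero would force $y\to x\in e_t$, contradicting nonadjacency of $x,y$). If $x\not\sim u$ in $e_t$ (equivalently in $e_{t+1}$, since the only new adjacency is $\{x,y\}$), then $x\to y$ with $x\not\sim u$ strongly protects $y\to u$ by configuration $(a)$, so $y-u$ cannot be undirected---a contradiction. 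If $x\sim u$ in $e_t$, then the compelled directed path from $x$ to $u$ forbids both $u\to x$ and an undirected $x-u$ (each would create a directed cycle in a consistent extension), leaving $x\to u\in e_t\subseteq{\cal P}_{t+1}$; but then $u$ is a common child of $x$ and $y$, contradicting the hypothesis. Hence $y\to u\in e_{t+1}$.

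For statement~(2), let $v\to y$ be a directed edge of ${\cal P}_{t+1}$. If $v=x$ the claim is preliminary fact~(b). Otherwise $v\to y\in e_t$ and, having survived the second loop, it is strongly protected in ${\cal P}_o$. If the protection is by a v-structure $v\to y\leftarrow w$ (configuration $(b)$), this v-structure is common to ${\cal P}_o$ and $e_{t+1}$ by observation~(a), so $v\to y\in e_{t+1}$. Otherwise I assume $v-y$ undirected in $e_{t+1}$ and seek a contradiction. Configuration $(a)$, say $w\to v\to y$ with $w\not\sim y$, is handled cleanly: if $w\to v\in e_{t+1}$ then configuration $(a)$ again protects $v\to y$, a contradiction; and if $w\to v\notin e_{t+1}$ then Lemma \ref{redecs} applied to $w\to v$ yields a directed path from $y$ to $v$ in $e_t$, which together with $v\to y\in e_t$ closes a directed cycle---impossible. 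The configurations $(c)$ and $(d)$ are reduced to these cases by the same finite-descent over the protecting vertices used in the proof of part~(i) (Figure \ref{spro1}): whenever a protecting edge is undirected in $e_{t+1}$, Lemma \ref{redecs} either closes a cycle with $v\to y$ or lets me pass to a protecting edge one step closer, and finiteness of the graph terminates the descent at a v-structure or at configuration $(a)$, forcing $v\to y$ to be compelled.

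The main obstacle is exactly this transfer of strong protection through the completion step in statement~(2): passing to the completed PDAG may undirect edges, so one cannot simply assert that an edge strongly protected in ${\cal P}_o$ stays directed in $e_{t+1}$, and the potential cascade of undirections among the edges protecting $v\to y$ must be controlled. The leverage is that Lemma \ref{redecs} converts each hypothetical undirection of an edge into $y$ (or into a neighbor of $y$) into a directed path returning to $y$, which collides with $v\to y$ to give an acyclicity contradiction; marshalling these into the finite descent for configurations $(c)$ and $(d)$ is the delicate part. A secondary point needing care is preliminary observation~(a)---that ${\cal P}_{t+1}$ and ${\cal P}_o$ carry the same v-structures---since it is what legitimizes applying Lemma \ref{redecs} to the completed PDAG extended from ${\cal P}_{t+1}$.
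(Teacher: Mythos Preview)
Your overall plan coincides with the paper's: prove each part by case analysis on the configuration that strongly protects the relevant edge, with Lemma~\ref{redecs} as the main tool. Part~(1) is handled correctly (indeed more explicitly than the paper, which is very terse), and in part~(2) your treatment of configurations~$(a)$ and~$(b)$ matches the paper's cases~(1) and~(2) exactly.

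The weak spot is your handling of configurations~$(c)$ and~$(d)$ in part~(2). The ``finite descent'' you invoke is not quite the one from the proof of Theorem~\ref{a3810theo}(i), and as stated it does not obviously go through. For configuration~$(c)$, say $v\to u\to y$, the two protecting edges are $v\to u$ and $u\to y$. Lemma~\ref{redecs} applied to $v\to u$ does give a contradiction (a directed path from $y$ to $u$ closes a cycle with $u\to y\in e_t$). But applied to $u\to y$ it only yields a length-zero path from $y$ to $y$, which closes nothing; so your alternative ``pass to a protecting edge one step closer'' must mean recursing on the protection of $u\to y$ in ${\cal P}_{t+1}$. Termination of that recursion is not by mere finiteness of the vertex set---it needs the acyclicity of $e_t$ to rule out repeating a parent of $y$---and even after termination you still have to combine the pieces to force $v\to y$. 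The paper sidesteps all of this: once $v\to u\in e_{t+1}$ is secured via Lemma~\ref{redecs}, it invokes Lemma~\ref{cq} (equivalently the shared-parent property of chain components) to conclude directly that $v\to y\in e_{t+1}$, regardless of whether $u\to y$ or $u-y$ holds in $e_{t+1}$. For configuration~$(d)$, the paper's argument (borrowed from case~(3) of Figure~\ref{spro1}) is also direct: the v-structure $w\to y\leftarrow w_1$ persists in $e_{t+1}$, and if $v-y$ held there, then $w\to v\leftarrow w_1$ would be a v-structure in $e_{t+1}$ but not in $e_t$, contradicting preservation of v-structures. Replace the descent by these two one-step arguments and the proof closes cleanly.
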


\begin{proof}

(1)

If $u$ is not a common child of $x$ and $y$, and $y\to u$ occurs in ${\cal P}_{t+1}$, we have that there is a structure like $x\to y\to u$ in  ${\cal P}_{t+1}$. Because $x
\to y$ occurs in  ${e}_{t+1} $, $y\to u$ must  be in ${e}_{t+1}$ too.

(2)

From    Algorithm \refalgaii, all directed edges $v\to y$   are    strongly  protected in ${\cal P}_{t+1}$. When $v$ is not adjacent to $x$ in  ${e}_{t+1}$, $v\to y \leftarrow x$ is a v-structure, so $v\to y$ occurs in ${e}_{t+1}$. When  $v$ is   adjacent to $x$,    we show below that  if $v \to y$    is strongly protected like one of four cases in Figure \ref{spro11}, it is also strongly protected in $e_{t+1}$.

 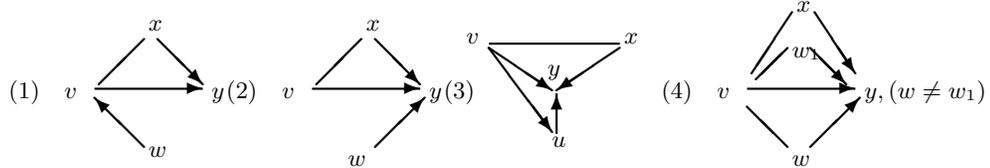
\begin{figure}[h]
 \centering
  \begin{minipage}[t]{0.22\linewidth}
   \centering
 { \unitlength=0.6mm \begin{picture}(50,40)
  \thicklines

\put(0,23){\mbox{$(1)\quad v $}}
\put(45,23){\mbox{$ y$}}
\put(31,38){\mbox{$x$}}
\put(31,10){\mbox{$ w$}}

\put(33,36){\vector(1,-1){10}}
\put(30,36){\line(-1,-1){10}}

\put(19,25){\vector(1,0){24}}
\put(30,12){\vector(-1,1){11}}
\end{picture}\\ }
     \end{minipage}
 \begin{minipage}[t]{0.22\linewidth}
   \centering
{ \unitlength=0.6mm
\begin{picture}(50,40)
  \thicklines

\put(0,23){\mbox{$(2)\quad v $}}
\put(45,23){\mbox{$ y$}}
\put(27,8){\mbox{$w$}}
\put(31,38){\mbox{$x$}}

\put(33,36){\vector(1,-1){10}}
\put(30,36){\line(-1,-1){10}}

\put(19,25){\vector(1,0){24}}
\put(33,12){\vector(1,1){11}}

\end{picture}\\ }
  \end{minipage}
    \begin{minipage}[t]{0.22\linewidth}
   \centering
 { \unitlength=0.6mm \begin{picture}(50,40)(0,5)
  \thicklines

\put(5,40){\mbox{$v $}}
\put(0,28){\mbox{$(3)$}}
\put(40,40){\mbox{$x$}}
\put(23,33){\mbox{$y$}}
\put(24,17){\mbox{$u$}}

\put(10,40){\line(1,0){29}}

\put(10,39){\vector(3,-2){14}}
\put(39,39){\vector(-3,-2){14}}

\put(10,39){\vector(3,-4){14}}
\put(25,20){\vector(0,1){9}}
\end{picture}\\ }
     \end{minipage}
    \begin{minipage}[t]{0.22\linewidth}
   \centering
{ \unitlength=0.6mm \begin{picture}(50,40)
  \thicklines
\put(0,23){\mbox{$(4)\quad v $}}
\put(45,23){\mbox{$ y, (w\neq w_1)$}}
\put(29,8){\mbox{$w$}}
\put(29,32){\mbox{$w_1$}}
\put(30,42){\mbox{$ x$}}

\put(29,42){\line(-2,-3){9}}
\put(34,42){\vector(2,-3){9}}

\put(19,25){\vector(1,0){24}}
\put(33,12){\vector(1,1){11}}
\put(28,12){\line(-1,1){9}}

\put(33,34){\vector(1,-1){8}}
\put(28,34){\line(-1,-1){7}}
\end{picture}\\ }
  \end{minipage}

\caption{strongly protected in  ${\mathcal P}_{t+1}$.  }  \label{spro11}
\end{figure}

In   case (1) of Figure \ref{spro11},  because there is no path from $y$ to $v$, we have that $w\to v$ occurs in ${e}_{t+1} $ from Lemma \ref{redecs}. Hence we have that $v\to y$ occurs in $e_{t+1}$.

In   case (2), there is a v-structure $w\to y\leftarrow v$ in ${\mathcal P}_{t+1}$.  So, $v\to y$ occurs in $e_{t+1}$.

In   case (3),  because there is no path from $y$ to $u$, we have    that    $v\to u$ occurs in $e_{t+1}$ according to  Lemma \ref{redecs}.  If $u\to y$ occurs in $e_{t+1}$,   $v\to y$    occurs in  $e_{t+1} $. If $u\to y$ become $u-y$  in $e_{t+1}$,  $v\to y$  must also be in in  $e_{t+1}$ from Lemma \ref{cq}.

From the proof of (i) of Theorem \ref{a3810theo}, we also have that  $v\to y$  must be in  $e_{t+1} $  when case (4) occurs in  ${\mathcal P}_{t+1}$.

 Notice that the above proof  also holds when we replace $x-v$ by a directed edge or add  an edge between $x$ and $w$( or $u$). Hence we have that   $v\to y$ in ${\cal P}_{t+1}$ also occurs in $e_{t+1}$.
\end{proof}

 We now give a proof  for (ii) of Theorem \ref{a3810theo}.

\textbf{(If)}

  We need to consider   four cases in Figure \refsprog\, in which  $y\to u$ is strongly protected in ${\cal P}_{t+1}$. Similar to the proof of (i) of Theorem \ref{a3810theo}, we first prove that the theorem holds in the first three cases in Figure \refsprog, which correspond to the cases $(1)'$, $(2)'$ and $(3)'$  shown in Figure \ref{spro21}. Notice that the following proof holds  for any   configuration   of   the    edge between $x$ and $w$.

 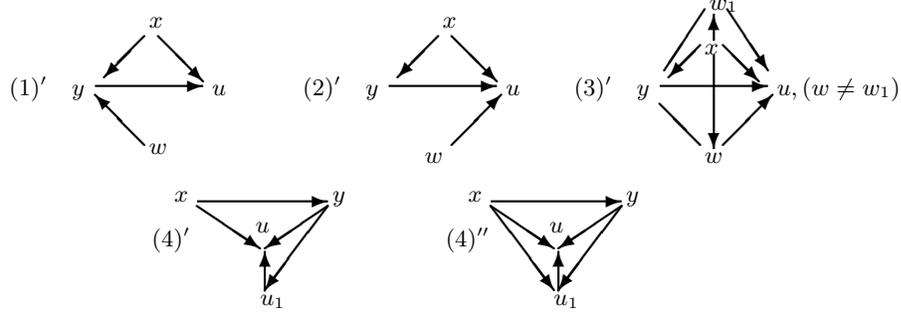
\begin{figure}[h]
 \centering

  \begin{minipage}[t]{0.3\linewidth}
   \centering
 { \unitlength=0.6mm \begin{picture}(50,40)
  \thicklines

\put(0,23){\mbox{$(1)'\quad y $}}
\put(45,23){\mbox{$ u$}}
\put(31,38){\mbox{$x$}}
\put(31,10){\mbox{$w$}}

\put(33,36){\vector(1,-1){10}}
\put(30,36){\vector(-1,-1){9}}

\put(19,25){\vector(1,0){24}}
\put(30,12){\vector(-1,1){11}}
\end{picture}\\ }
     \end{minipage}
 \begin{minipage}[t]{0.3\linewidth}
   \centering
{ \unitlength=0.6mm
\begin{picture}(50,40)
  \thicklines

\put(0,23){\mbox{$(2)'\quad y $}}
\put(45,23){\mbox{$ u$}}
\put(27,8){\mbox{$w$}}
\put(31,38){\mbox{$x$}}

\put(33,36){\vector(1,-1){10}}
\put(30,36){\vector(-1,-1){9}}

\put(19,25){\vector(1,0){24}}
\put(33,12){\vector(1,1){11}}

\end{picture}\\ }
  \end{minipage}
    \begin{minipage}[t]{0.3\linewidth}
   \centering
{ \unitlength=0.6mm \begin{picture}(60,40)(0,0)
  \thicklines
\put(0,23){\mbox{$(3)'\quad y $}}
\put(45,23){\mbox{$ u, (w\neq w_1)$}}
\put(29,8){\mbox{$ w$}}
\put(29,32){\mbox{$x$}}
\put(30,42){\mbox{$w_1$}}

\put(29,42){\line(-2,-3){9}}
\put(34,42){\vector(2,-3){9}}
\put(31,35){\vector(0,1){6}}
\put(31,32){\vector(0,-1){21}}

\put(19,25){\vector(1,0){24}}
\put(33,12){\vector(1,1){11}}
\put(28,12){\line(-1,1){9}}

\put(33,34){\vector(1,-1){8}}
\put(28,34){\vector(-1,-1){7}}
\end{picture}\\ }
  \end{minipage}

      \begin{minipage}[t]{0.3\linewidth}
   \centering
 { \unitlength=0.6mm \begin{picture}(50,40)
  \thicklines

\put(5,40){\mbox{$x $}}
\put(0,30){\mbox{$(4)'$}}
\put(40,40){\mbox{$y$}}
\put(23,33){\mbox{$u$}}
\put(24,17){\mbox{$u_1$}}

\put(10,40){\vector(1,0){29}}

\put(10,39){\vector(3,-2){14}}
\put(39,39){\vector(-3,-2){14}}

\put(39,39){\vector(-3,-4){14}}
\put(25,20){\vector(0,1){9}}
\end{picture}\\ }
     \end{minipage}
 \begin{minipage}[t]{0.3\linewidth}
   \centering
 { \unitlength=0.6mm \begin{picture}(50,40)
  \thicklines

\put(5,40){\mbox{$x $}}
\put(0,30){\mbox{$(4)''$}}
\put(40,40){\mbox{$y$}}
\put(23,33){\mbox{$u$}}
\put(24,17){\mbox{$u_1$}}

\put(10,40){\vector(1,0){29}}

\put(10,39){\vector(3,-2){14}}
\put(39,39){\vector(-3,-2){14}}

\put(10,39){\vector(3,-4){14}}
\put(39,39){\vector(-3,-4){14}}
\put(25,20){\vector(0,1){9}}
\end{picture}\\ }
     \end{minipage}
\caption{ Five cases in which  $x \to u$ or $y\to u$ is strongly protected.}  \label{spro21}
\end{figure}

Consider the case $(1)'$ in Figure \ref{spro21}. From Lemma \ref{a8theo0}, $w \to y$ occurs in $e_{t+1}$. We have that $x\to u$ must occur in $e_{t+1} $.

 Because there is  a v-structure $w\to u \leftarrow y$ in case $(2)'$, we have that   $w\to u \leftarrow y$ also occurs in $e_{t+1}$.

After implementing Algorithm \refalgaii, if case $(3)'$  occurs in  ${\mathcal P}_{t+1}$, we have that $y-w$ is not strongly protected in  ${\mathcal P}_{t+1}$ and the edge between $y$ and $w$ have opposite directions  in different consistent extensions of ${\mathcal P}_{t+1}$. Hence  $y-w$ occurs in $e_{t+1}$. Similarly,  $y-w_1$ also occurs in $e_{t+1}$. Moreover, the  v-structure $w\to u \leftarrow w_1$ occurs in $e_{t+1}$.  We have that $y\to u$ is strongly protected and occurs in $e_{t+1}$.

    We now just need to show that a directed edge $y\to u$ that is   strongly protected in ${\mathcal P}_{t+1}$ like   case    $(4)'$ ($x$ and $u_1$ are nonadjacent) or $(4)''$ ($x$ and $u_1$ are adjacent)   in Figure \ref{spro21}   is   also directed in $e_{t+1} $.

In case $(4)'$, from  delete   Lemma \ref{a8theo0}, $y\to u_1$ occurs in $e_{t+1}$. Moreover, $x\to u\leftarrow u_1$ is a v-structure, so $u_1\to u$ also occurs in ${\mathcal C}_1 $. So we have $y\to u$ must occur in $e_{t+1}$.

In case $(4)''$, we have  that     $u_1$ is also a common child of $x$ and $y$;    hence, $y\to u_1$   will   also be strongly protected in ${\mathcal P}_{t+1}$ from the condition of this Theorem.
Consider $y\to u_1$;   if it is protected in ${\mathcal P}_{t+1}$ like at least one case other than $(4)''$,  from our proof, $y\to u_1$ is also compelled in  $e_{t+1} $, so    $y\to u$ must be compelled in $e_{t+1}$. If $y\to u_1$ is protected in ${\mathcal P}_{t+1}$ like case $(4)''$, we can find another vertex $u_2$ that   is a common child of $y$ and $x$;    from the proof above, we know if $y\to u_2$ is directed in $e_{t+1}$, $y\to u_1$ and $y\to u$ are directed too. Since the graph   has    finite vertices, we can find a common child of $x$ and $y$, say $u_k$, such that $u_k$ is protected in  ${\mathcal P}_{t+1}$ like at least one case other than $(4)''$. It is compelled in $e_{t+1}$,   so   we can get $y\to u_{k-1}$ is compelled in ${\mathcal P}_{t+1}$,   so,    finally, $y\to u$ is also compelled in ${\mathcal P}_{t+1}$. We have that  $y\to u$ must occur in $e_{t+1} $ and \textbf{id}$_3$ holds.

\textbf{(Only if)} Let $u$ be a common child of $x$ and $y$ in $e_{t}$. If condition \textbf{id}$_3$ holds for a valid operator InsertD $x\to y$,  we have that $y\to u$   in $e_{t}$ occurs in $e_{t+1}$ and   is strongly protected in $e_{t+1}$.  We need to show  that   $y\to u$ must be strongly protected in ${\cal P}_{t+1}$,    obtained in Algorithm \refalgaii. From the   proof of this statement above, we know we just need to consider the five configurations in which $y\to u$ is strongly protected in ${e}_{t+1}$ in Figure \ref{spro21}.

We know that v-structures in $e_{t+1}$ occur in ${\cal P}_{t+1}$ therefore,  the v-structure in the  cases    $(2)'$, $(3)'$ and $(4)'$ in ${e}_{t+1}$ must occur in ${\cal P}_{t+1}$ too.

For case $(2)'$,  $y\to u$  is also strongly protected in ${\cal P}_{t+1}$, since the v-structure $y\to u\leftarrow w$ occurs in ${\cal P}_{t+1}$.

For case $(3)'$,  we have that (1)  the   v-structure $w_1\to u\leftarrow w$ occurs in ${\cal P}_{t+1}$; (2) $e_{t+1}$ and ${\cal P}_{t+1}$ have the same set of v-structures. Hence the  v-structure $w_1\to y\leftarrow w$ does not occur in ${\cal P}_{t+1}$. We have  that $y\to u$ is also strongly protected in ${\cal P}_{t+1}$  for any  configuration   of edges between $w_1$, $y$ and $w$.

For case $(4)'$, from Algorithm \refalgaii, $y\to u_1$ occurs in ${\cal P}_{t+1}$. Hence $y\to u$ is strongly protected in ${\cal P}_{t+1}$.

 Because the valid operator ``Insert $x\to y$" satisfies condition \textbf{id}$_3$, from Lemma \ref{re3}, we have that the operator ``Delete $x\to y$", when applied to $e_{t+1}$, results in $e_t$.    From the condition \textbf{dd}$_2$, any directed edge $v\to y$ in $e_{t+1}$ also occurs in $e_{t}$. For case $(1)'$,  we have that $v\to y\to u$ is strongly protected in ${\cal P}_{t+1}$.

Consider the case $(4)''$, we have that v-structures  $x\to u\to y$ and  $x\to u_1 \to y$ occur in $e_t$ since $e_t$ is the resulting completed PDAG of the operator ``Delete $x\to y$" from $e_{t+1}$. According to Algorithm \refalgaii, $x\to y$,  $x\to u\to y$ and  $x\to u_1 \to y$ occur in ${\cal P}_{t+1}$.   We have that $u\to u_1$ does not occur in $e_t$, otherwise $u\to u_1$ occurs in at least  one   consistent extension of ${\cal P}_{t+1}$ and consequently  $u_1\to u$ does not occur in $e_{t+1}$. To prove that $y\to u$ is strongly protected in $e_{t+1}$, we   need to show that $u_1\to u$ occurs in $e_t$.  Equivalently, we show $u_1-u$ does not occur in $e_t$. If $u_1-u$ occurs in a chain component denoted by $\tau$  in $e_t$, we have that neither $x$ nor $y$ are in $\tau$. The undirected edges adjacent to $x$ or $y$   are    in chain components different to $\tau$. Hence \textbf{id}$_3$ holds for the operator ``Insert $x\to y$", and   all parents of $\tau$ occur in $e_{t+1}$ too. We have that $u_1-u$ occurs in $e_{t+1}$ too. It's  a contradiction that $u_1\to y$ occurs in $e_{t+1}$.
  $\Box$

\noindent{\textbf{ Proof of  (iii) of Theorem \ref{a3810theo}  }}

\textbf{(If)}

 Since Algorithm \refalgaiii\, returns True, all directed edges like $v\to y$ are    strongly protected  in  ${\cal P}_{t+1}$. Consider the four configurations in which $v\to y$ is strongly protected in ${\cal P}_{t+1}$  in  Figure \ref{spro10}. Notice that ${\cal P}_{t+1}$ is obtained by deleting  $x\to y$ from completed PDAG $\mathcal{C} $,   by   Lemma \ref{fdel}, all directed edges with no vertices being descendants   of $y$ (excluding $y$)   in ${\cal P}_{t+1}$ will occur    in $e_t$.

Hence, we have the edges    $w\to v$ in case (1),   and    $v\to w$ in case (3)
will   remain  in $e_{t+1}$.  We have $v\to y$ in case (1) and case (3) must occur in $e_{t+1} $. Because   v-structures in case    (2) and case (4) will also remain   in $e_{t+1} $,    $v\to y$ in case (2) and case (4) must occur in $e_{t+1}$ too.

\begin{figure}[H]
 \centering
  \begin{minipage}[t]{0.22\linewidth}
   \centering
 { \unitlength=0.6mm \begin{picture}(50,40)
  \thicklines

\put(0,23){\mbox{$(1):v $}}
\put(45,23){\mbox{$ y$}}
\put(31,8){\mbox{$ w$}}

\put(19,25){\vector(1,0){24}}
\put(30,10){\vector(-1,1){11}}
\end{picture}\\ }
     \end{minipage}
 \begin{minipage}[t]{0.22\linewidth}
   \centering
{ \unitlength=0.6mm
\begin{picture}(50,40)
  \thicklines

\put(0,23){\mbox{$(2):v $}}
\put(45,23){\mbox{$ y$}}
\put(27,8){\mbox{$w$}}

\put(19,25){\vector(1,0){24}}
\put(33,12){\vector(1,1){11}}

\end{picture}\\ }
  \end{minipage}
    \begin{minipage}[t]{0.22\linewidth}
   \centering
{ \unitlength=0.6mm \begin{picture}(50,40)
  \thicklines

\put(0,23){\mbox{$(3):v $}}
\put(45,23){\mbox{$ y$}}
\put(29,8){\mbox{$w$}}

\put(19,25){\vector(1,0){24}}
\put(33,12){\vector(1,1){11}}
\put(17,22){\vector(1,-1){9}}
\end{picture}\\ }
  \end{minipage}
    \begin{minipage}[t]{0.30\linewidth}
   \centering
{ \unitlength=0.6mm \begin{picture}(60,40)(0,0)
  \thicklines
\put(0,23){\mbox{$(4):v $}}
\put(45,23){\mbox{$y, (w\neq w_1)$}}
\put(29,8){\mbox{$ w_1$}}
\put(28,37){\mbox{$ w$}}

\put(19,25){\vector(1,0){24}}
\put(33,12){\vector(1,1){11}}
\put(28,12){\line(-1,1){9}}
\put(33,36){\vector(1,-1){10}}
\put(28,36){\line(-1,-1){9}}
\end{picture}\\ }
  \end{minipage}

\caption{Four configurations of   $v \to y$ being strongly protected.}   \label{spro10}
\end{figure}
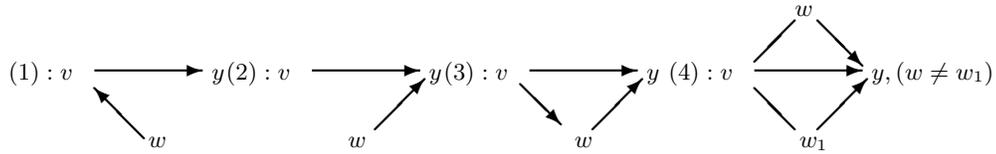

\textbf{(Only if)}
If condition \textbf{dd}$_2$ holds for a valid operator DeleteD $x\to y$, all edges like $v\to y$ ($v\neq x$) in $e_t$ will occur in $e_{t+1}$. $v\to y$ must be strongly protected in $e_{t+1}$.  Consider the four configurations in which $v\to y$ is strongly protected in ${e}_{t+1}$  as Figure \ref{spro10}. We know  that  v-structures in $e_{t+1}$ must occur in ${e}_{t}$;  consequently, all directed edges in  $e_{t+1}$ must occur in ${e}_{t}$; they  also occur in ${\cal P}_{t+1}$. From   Lemma \ref{fdel}, $w-v-w_1$ in case (4) in Figure \ref{spro10} must be in  ${\cal P}_{t+1}$, so  an edge    $v\to y$ that is strongly protected in $e_{t+1}$ is also  strongly protected in   ${\cal P}_{t+1}$.

$\Box$
\bibliographystyle{plain}
 \bibliography{AOS1125_suppl}

\begin{thebibliography}{1}

\bibitem{chickering1995transformational}
D.~M. Chickering.
\newblock {A transformational characterization of equivalent Bayesian network
  structures}.
\newblock In {\em UAI¡¯95}, pages 87--98. Citeseer, 1995.

\bibitem{chickering2002learning}
D.~M. Chickering.
\newblock {Learning equivalence classes of Bayesian-network structures}.
\newblock {\em The Journal of Machine Learning Research}, 2:445--498, 2002.

\bibitem{dor1992simple}
D.~Dor and M.~Tarsi.
\newblock {A simple algorithm to construct a consistent extension of a
  partially oriented graph}.
\newblock {\em Technicial Report R-185, Cognitive Systems Laboratory, UCLA},
  1992.

\bibitem{heaos}
Y.~He, J.~Jia, and B.~Yu.
\newblock Reversible mcmc on markov equivalence classes of sparse directed
  acyclic graphs.
\newblock {\em arXiv preprint arXiv:1209.5860}, 2012.

\end{thebibliography}

\end{document}